\theoremstyle{plain}
\newtheorem{theorem}{Theorem}[section]
\newtheorem{lemma}[theorem]{Lemma}
\theoremstyle{definition}
\theoremstyle{remark}
\newcommand{\bs}{\boldsymbol{s}}
\newcommand{\ba}{\boldsymbol{a}}
\newcommand{\btau}{\boldsymbol{\tau}}
\newcommand{\alias}{AdaptDiffuser\xspace}
\icmltitlerunning{AdaptDiffuser: Diffusion Models as Adaptive Self-evolving Planners}
\begin{document}

\twocolumn[
\icmltitle{AdaptDiffuser: Diffusion Models as Adaptive Self-evolving Planners}



\icmlsetsymbol{equal}{*}

\begin{icmlauthorlist}
\icmlauthor{Zhixuan Liang}{hku}
\icmlauthor{Yao Mu}{hku}
\icmlauthor{Mingyu Ding}{hku,ucb}
\icmlauthor{Fei Ni}{tju}
\icmlauthor{Masayoshi Tomizuka}{ucb}
\icmlauthor{Ping Luo}{hku,shlab}

\end{icmlauthorlist}

\icmlaffiliation{hku}{Department of Computer Science, The University of Hong Kong, Hong Kong SAR}
\icmlaffiliation{ucb}{University of California, Berkeley, USA}
\icmlaffiliation{tju}{College of Intelligence and Computing, Tianjin University, Tianjin, China}
\icmlaffiliation{shlab}{Shanghai AI Laboratory, Shanghai, China}

\icmlcorrespondingauthor{Ping Luo}{pluo.lhi@gmail.com}

\icmlkeywords{Machine Learning, ICML}

\vskip 0.3in
]



\printAffiliationsAndNotice{}  

\begin{abstract}
Diffusion models have demonstrated their powerful generative capability in many tasks, with great potential to serve as a paradigm for offline reinforcement learning.
However, the quality of the diffusion model is limited by the insufficient diversity of training data, which hinders the performance of planning and the generalizability to new tasks.
This paper introduces \alias, an evolutionary planning method with diffusion that can self-evolve to improve the diffusion model hence a better planner, not only for seen tasks but can also adapt to unseen tasks.
\alias enables the generation of rich synthetic expert data for goal-conditioned tasks using guidance from reward gradients.
It then selects high-quality data via a discriminator to finetune the diffusion model, which improves the generalization ability to unseen tasks. 
Empirical experiments on two benchmark environments and two carefully designed unseen tasks in KUKA industrial robot arm and Maze2D environments demonstrate the effectiveness of \alias.
For example, \alias not only outperforms the previous art Diffuser~\cite{janner2022planning} by 20.8\% on Maze2D and 7.5\% on MuJoCo locomotion, but also adapts better to new tasks, e.g., KUKA pick-and-place, by 27.9\% without requiring additional expert data.
More visualization results and demo videos could be found on \href{https://adaptdiffuser.github.io/}{our project page}.
\end{abstract}

\begin{figure}[t]
\centering
\begin{subfigure}{0.49\textwidth}
  \centering
  \includegraphics[width=0.9\linewidth]{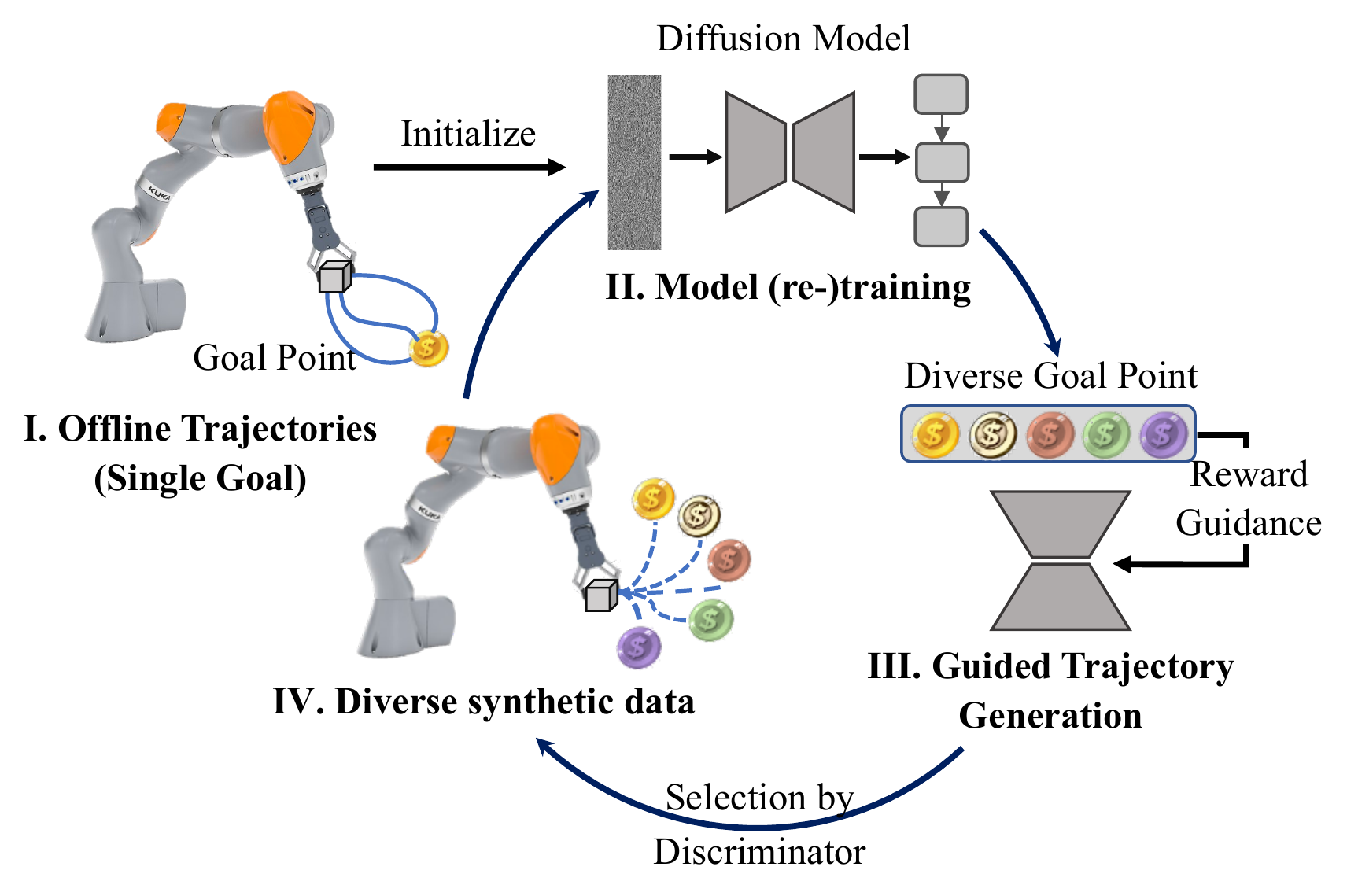}
  \vspace{-2pt}
  \caption{Illustration of \alias.}
  \label{fig:teaser-a}
\end{subfigure}
\begin{subfigure}{0.49\textwidth}
  \centering
  \vspace{8pt}
  \includegraphics[width=0.99\linewidth]{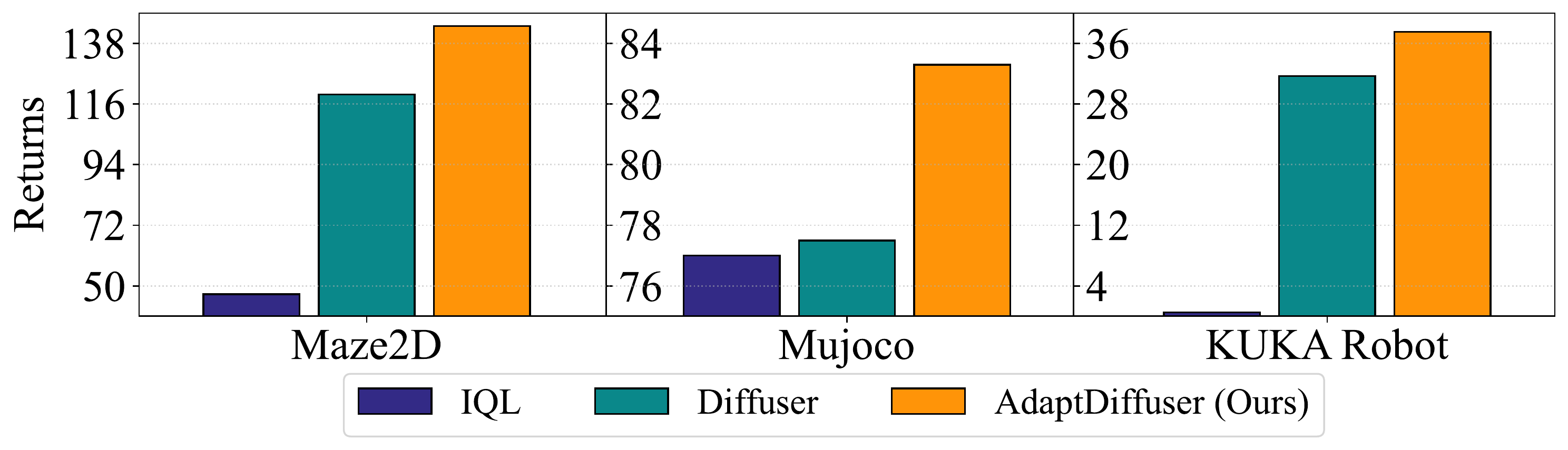}
  \vspace{-2pt}
  \caption{Performance comparisons on three benchmarks.}
  \label{fig:teaser-b}
\end{subfigure}
\caption{Overall framework and performance comparison of \alias. It enables diffusion models to generate rich synthetic expert data using guidance from reward gradients of either seen or unseen goal-conditioned tasks.
Then, it iteratively selects high-quality data via a discriminator to finetune the diffusion model for self-evolving, leading to improved performance on seen tasks and better generalizability to unseen tasks. }
\vspace{-10pt}
\label{fig:teaser}
\end{figure}

\section{Introduction}
Offline reinforcement learning (RL)~\cite{levine2020offline,prudencio2022survey} aims to learn policies from previously collected offline data without interacting with the live environment.
Traditional offline RL approaches require fitting value functions or computing policy gradients, which are challenging due to limited offline data~\cite{agarwal2020optimistic,kumar2020conservative,wu2019behavior,kidambi2020morel}. 
Recent advances in generative sequence modeling~\cite{chen2021decision, janner2021offline, janner2022planning} provide effective alternatives to conventional RL problems by modeling the joint distribution of sequences of states, actions, rewards and values.
For example, Decision Transformer~\cite{chen2021decision} casts offline RL as a form of conditional sequence modeling, which allows more efficient and stable learning without the need to train policies via traditional RL algorithms like temporal difference learning~\cite{sutton1988learning}.
By treating RL as a sequence modeling problem, it bypasses the need of bootstrapping for long-term credit assignment, avoiding one of the ``deadly triad"~\cite{sutton2018reinforcement} challenges in reinforcement learning.

Therefore, devising an excellent sequence modeling algorithm is essential for the new generation of offline RL. 
The diffusion probability model~\cite{rombach2022high,ramesh2022hierarchical}, with its demonstrated success in generative sequence modeling for natural language processing and computer vision, presents an ideal fit for this endeavor.
It also shows great potential as a paradigm for planning and decision-making.
For example, diffusion-based planning methods~\cite{janner2022planning, ajay2022conditional, wang2022diffusion} train trajectory diffusion models based on offline data and apply flexible constraints on generated trajectories through reward guidance during sampling.
In consequence, diffusion planners show notable performance superiority compared with transformer-based planners like Decision Transformer~\cite{chen2021decision} and Trajectory Transformer~\cite{janner2021offline} on long horizon tasks, while enabling goal-conditioned rather than reward-maximizing control at the same time.

While diffusion-based planners have achieved success in certain areas, their performance is limited by the lack of diversity in their training data.
In decision-making tasks, the cost of collecting a diverse set of offline training data may be high, and this insufficient diversity would impede the ability of the diffusion model to accurately capture the dynamics of the environment and the behavior policy.
As a result, diffusion models tend to perform inferior when expert data is insufficient, and particularly when facing new tasks.
This raises a natural question: can we use the generated heterogeneous data by the reward-guided diffusion model to improve the diffusion model itself since it has powerful generative sequence modeling capability?
As diffusion-based planners can generate quite diverse ``dream" trajectories for multiple tasks which may be different from the original task the training data are sampled from, greatly superior to Decision Transformer \cite{chen2021decision}, enabling the diffusion model to be self-evolutionary makes it a stronger planner, potentially benefiting more decision-making requirements and downstream tasks.
%


In this paper, we present \alias, a diffusion-based planner for goal-conditioned tasks that can generalize to novel settings and scenarios through self-evolution (see Figure~\ref{fig:teaser}). 
Unlike conventional approaches that rely heavily on specific expert data, \alias uses gradient of reinforcement learning rewards, directly integrated into the sampling process, as guidance to generate diverse and heterogeneous synthetic demonstration data for both existing and unseen tasks. 
The generated demonstration data is then filtered by a discriminator, of which the high-quality ones are used to fine-tune the diffusion model,
resulting in a better planner with significantly improved self-bootstrapping capabilities on previously seen tasks and an enhanced ability of generalizing to new tasks.
%
As a consequence, \alias not only improves the performance of the diffusion-based planner on existing benchmarks, but also enables it to adapt to unseen tasks without the need for additional expert data.

It's non-trivial to construct and evaluate \alias for both seen and unseen tasks.
We first conduct empirical experiments on two widely-used benchmarks (MuJoCo~\cite{todorov2012mujoco} and Maze2d) of D4RL~\cite{fu2020d4rl} to verify the self-bootstrapping capability of \alias on seen tasks.
Additionally, we creatively design new pick-and-place tasks based on previous stacking tasks in the KUKA~\cite{schreiber2010fast} industrial robot arm environment, and introduce novel auxiliary tasks (e.g., collecting gold coins) in Maze2D. 
The newly proposed tasks and settings provide an effective evaluation of the generalization capabilities of \alias on unseen tasks.


Our contributions are three-fold:
\textbf{1)} We present \alias, allowing diffusion-based planners to self-evolve for offline RL by generating high-quality heterogeneous data with reward-integrated diffusion model directly and filtering out inappropriate examples with a discriminator. 
\textbf{2)} We apply our self-evolutionary \alias to unseen (zero-shot) tasks without any additional expert data, demonstrating its strong generalization ability and adaptability.
\textbf{3)} Extensive experiments on two widely-used offline RL benchmarks from D4RL as well as our carefully designed unseen tasks in KUKA and Maze2d environments validate the effectiveness of \alias.







\section{Related Works}

\textbf{Offline Reinforcement Learning.}
Offline RL~\cite{levine2020offline,prudencio2022survey} is a popular research field that aims to learn behaviors using only offline data such as those collected from previous experiments or human demonstrations, without the need to interact with the live environment from time to time at the training stage. 
%

However, in practice, offline RL faces a major challenge that standard off-policy RL methods may fail due to the overestimation of values, caused by the distribution deviation between the offline dataset and the policy to learn. Most conventional offline RL methods use action-space constraints or value pessimism~\cite{buckman2020importance} to overcome the challenge~\cite{agarwal2020optimistic, kumar2020conservative, siegel2020keep, wu2019behavior, yang2022regularized}.  
For example, conservative Q-learning (CQL)~\cite{kumar2020conservative} addresses these limitations by learning a conservative Q-function, ensuring the expected value under this Q-function is lower than its true value.

\textbf{Reinforcement Learning as Sequence Modeling.}
Recently, a new paradigm for Reinforcement Learning (RL) has emerged, in which RL is viewed as a generic sequence modeling problem. 
It utilizes transformer-style models to model trajectories of states, actions, rewards and values, and turns its prediction capability into a policy that leads to high rewards. 
As a representative, Decision Transformer (DT)~\cite{chen2021decision} leverages a causally masked transformer to predict the optimal action, which is conditional on an autoregressive model that takes the past state, action, and expected return (reward) into account. It allows the model to consider the long-term consequences of its actions when making a decision. And based on DT, Trajectory Transformer (TT)~\cite{janner2021offline} is proposed to utilize transformer architecture to model distributions over trajectories, repurposes beam search as a planning algorithm, and shows great flexibility across long-horizon dynamics prediction, imitation learning, goal-conditioned RL, and offline RL. Bootstrapped Transformer~\cite{wang2022bootstrapped} further incorporates the idea of bootstrapping into DT and uses the learned model to self-generate more offline data to further improve sequence model training. However, Bootstrapped Transformer could not integrate RL reward into the data synthesizing process directly and can only amplify homogeneous data trivially for its original task, which can boost the performance but cannot enhance the adaptability on another unseen task. Besides, such approaches lack flexibility in adapting to new reward functions and tasks in different environments, as the generated data is not suitable for use in new tasks or environments. 

Diffuser~\cite{janner2022planning} presents a powerful framework for trajectory generation using the diffusion probabilistic model, which allows the application of flexible constraints on generated trajectories through reward guidance during sampling.
The consequent work, Decision Diffuser~\cite{ajay2022conditional} introduces conditional diffusion with reward or constraint guidance for decision-making tasks, further enhancing Diffuser's performance. 
Additionally, Diffusion-QL~\citep{wang2022diffusion}, adds a regularization term to the training loss of the conditional diffusion model, guiding the model to learn optimal actions. 
Nevertheless, the performance of these methods is still limited by the quality of offline expert data, leaving room for improvement in adapting to new tasks or settings.
\textbf{Diffusion Probabilistic Model.}
Diffusion models are a type of generative model that represents the process of generating data as an iterative denoising procedure \cite{sohl2015deep, ho2020denoising}. They have made breakthroughs in multiple tasks such as image generation \cite{song2020denoising}, waveform generation \cite{chen2020wavegrad}, 3D shape generation \cite{zhou2021shape} and text generation \cite{austin2021structured}. These models, which learn the latent structure of the dataset by modeling the way in which data points diffuse through the latent space, are closely related to score matching \cite{hyvarinen2005score} and energy-based models (EBMs) \cite{lecun06atutorial, du2019implicit, nijkamp2019learning, grathwohl2020stein}, as the denoising process can be seen as a form of parameterizing the gradients of the data distribution \cite{song2019generative}. 

Moreover, in the sampling process, diffusion models allow flexible conditioning \cite{dhariwal2021diffusion} and have the ability to generate compositional behaviors \cite{du2020compositional}. It shows that diffusion models own promising potential to generate effective behaviors from diverse datasets and plan under different reward functions including those not encountered during training.



\section{Preliminary}
Reinforcement Learning is generally modeled as a Markov Decision Process (MDP) with a fully observable state space, denoted as $\mathcal{M}=(\mathcal{S}, \mathcal{A}, \mathcal{T}, \mathcal{R}, \gamma)$, where $\mathcal{S}$ is the state space and $\mathcal{A}$ is the action space. 
Besides, $\mathcal{T}$ is the state transition function with the dynamics of this discrete-time system that $\bs_{t+1}=\mathcal{T}(\bs_t, \ba_t)$ at state $\bs_t \in \mathcal{S}$ given the action $\ba_t \in \mathcal{A}$. $\mathcal{R}(\bs_t, \ba_t)$ defines the reward function and $\gamma\in(0,1]$ is the discount factor for future reward.

Considering the offline reinforcement learning as a sequence modeling task, the objective of trajectory optimization is to find the optimal sequence of actions $\ba_{0:T}^*$ that maximizes the expected return with planning horizon $T$, which is the sum of per time-step rewards or costs $R(\bs_t, \ba_t)$:
\begin{equation}
\ba_{0:T}^*=\underset{\ba_{0:T}}{\arg \max } \mathcal{J}(\bs_0, \ba_{0:T})=\underset{\ba_{0:T}}{\arg \max } \sum_{t=0}^T \gamma^t R(\bs_t, \ba_t).
\end{equation}

The sequence data generation methods utilizing diffusion probabilistic models \cite{sohl2015deep, ho2020denoising} pose the generation process as an iterative denoising procedure, denoted by $p_\theta(\btau^{i-1} \mid \btau^i)$ where $\btau$ represents a sequence and $i$ is an indicator of the diffusion timestep.

Then the distribution of sequence data is expanded with the step-wise conditional probabilities of the denoising process,
\vspace{-2pt}
\begin{equation}
p_\theta\left(\btau^0\right)=\int p\left(\btau^N\right) \prod_{i=1}^N p_\theta\left(\btau^{i-1} \mid \btau^i\right) \mathrm{d} \btau^{1: N}
\vspace{-2pt}
\end{equation}
where $p\left(\btau^N\right)$ is a standard normal distribution and $\btau^{0}$ denotes original (noiseless) sequence data. 

The parameters $\theta$ of the diffusion model are optimized by minimizing the evidence lower bound (ELBO) of negative log-likelihood of $p_\theta\left(\btau^0\right)$, similar to the techniques used in variational Bayesian methods.
\begin{equation}
\theta^*=\arg \min _\theta-\mathbb{E}_{\btau^0}\left[\log p_\theta\left(\btau^0\right)\right]
\label{eq:theta_optim}
\end{equation}
What's more, as the denoising process is the reverse of a forward diffusion process which corrupts input data by gradually adding noise and is typically denoted by $q\left(\btau^i \mid \btau^{i-1}\right)$, the reverse process can be parameterized as Gaussian under the condition that the forward process obeys the normal distribution and the variance is small enough \cite{feller2015theory}.
\begin{equation}
p_\theta\left(\boldsymbol{\tau}^{i-1} \mid \boldsymbol{\tau}^i\right)=\mathcal{N}\left(\boldsymbol{\tau}^{i-1} \mid \mu_\theta\left(\boldsymbol{\tau}^i, i\right), \Sigma^i\right)
\label{eq:p1}
\end{equation}
in which $\mu_{\theta}$ and $\Sigma$ are the mean and covariance of the Gaussian distribution respectively.

For model training, with the basis on Eq.~\ref{eq:theta_optim} and \ref{eq:p1}, \cite{ho2020denoising} proposes a simplified surrogate loss:
\begin{equation}
\label{eq:diff_loss}
    \mathcal{L}_{\text{denoise}}(\theta) \coloneqq \mathbb{E}_{i, \btau^{0} \sim q, \epsilon \sim \mathcal{N}}[||\epsilon - \epsilon_{\theta}(\btau^i, i)||^{2}]
\end{equation}
where $i \in \{0, 1, ..., N\}$ is the diffusion timestep, $\epsilon \sim \mathcal{N}(\boldsymbol{0}, \boldsymbol{I})$ is the target noise, and $\btau^{i}$ is the trajectory $\btau^{0}$ corrupted by noise $\epsilon$ for $i$ times. This is equivalent to predicting the mean $\mu_{\theta}$ of $p_\theta\left(\boldsymbol{\tau}^{i-1} \mid \boldsymbol{\tau}^i\right)$ as the function mapping from $\epsilon_{\theta}(\btau^i, i)$ to $\mu_{\theta}(\btau^i, i)$ is a closed-form expression.

\section{Method}
In this section, we first introduce the basic planning with the diffusion method and its limitations. Then, we propose \alias, a novel self-evolved sequence modeling method for decision-making with the basis of diffusion probabilistic models.
\alias is designed to enhance the performance of diffusion models in existing decision-making tasks, especially the goal-conditioned tasks, and further improve their adaptability in unseen tasks without any expert data to supervise the training process.

\subsection{Planning with Task-oriented Diffusion Model}
Following previous work \cite{janner2022planning}, we can re-define the planning trajectory as a special kind of sequence data with actions as an additional dimension of states like:
\begin{equation}
\btau=\left[\begin{array}{llll}
\bs_0 & \bs_1 & ... & \bs_T \\
\ba_0 & \ba_1 & ... & \ba_T
\end{array}\right]
\label{eq:tau}
\end{equation}
Then we can use the diffusion probabilistic model to perform trajectory generation. However, the aim of planning is not to restore the original trajectory but to predict future actions with the highest reward-to-go, the offline reinforcement learning should be formulated as a conditional generative problem with guided diffusion models that have achieved great success on image synthesis \cite{dhariwal2021diffusion}. So, we drive the conditional diffusion process:
\begin{equation}
\label{eq:diff_plan}
    q(\btau^{i+1} | \btau^i), \;\;\;\; p_{\theta}(\btau^{i-1}|\btau^{i}, \boldsymbol{y}(\btau))
\end{equation}
where the new term $\boldsymbol{y}(\btau)$ is some specific information of the given trajectory $\btau$, such as the reward-to-go (return) $\mathcal{J}(\btau^0)$ of the trajectory, the constraints that must be satisfied by the trajectory and so on. On this basis, we can rewrite the optimization objective as,
\begin{equation}
 \theta^*=\arg \min _\theta-\mathbb{E}_{\btau^0}\left[\log p_\theta(\btau^{0} | \boldsymbol{y}(\btau^0))\right]
\label{eq:cond_gen_model}
\end{equation}

\begin{figure}[t]
\centering 
\includegraphics[width=0.95\linewidth]{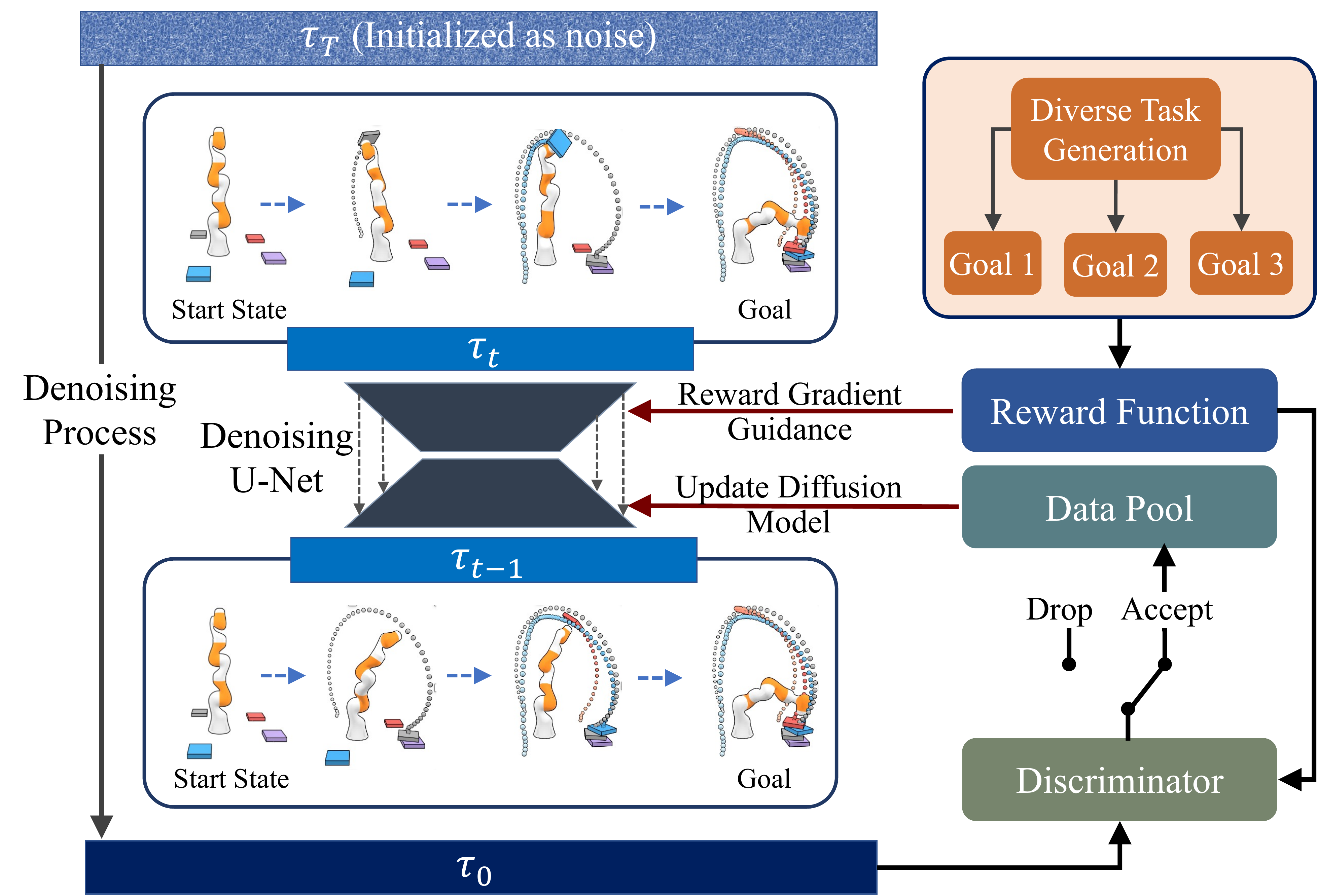}
\caption{Overall framework of \alias. 
To improve the adaptability of the diffusion model to diverse tasks, rich data with distinct objectives is generated, guided by each task’s reward function. 
%
During the diffusion denoising process, we utilize a pre-trained denoising U-Net to progressively generate high-quality trajectories.
At each denoising time step, we take the
task-specific reward of a trajectory to adjust the gradient of state and action sequence, thereby creating trajectories that align with specific task objectives. 
Subsequently, the generated synthetic trajectory is evaluated by a discriminator to see if it meets the standards. If yes, it is incorporated into a data pool to fine-tune the diffusion model. 
The procedure iteratively enhances the generalizability of our model for both seen and unseen settings.}
\label{fig:main_frame} 
\end{figure}

Therefore, for tasks aiming to maximize the reward-to-go, we take $\mathcal{O}_{t}$ to denote the optimality of the trajectory at timestep $t$. And $\mathcal{O}_{t}$ obeys Bernoulli distribution with ${p(\mathcal{O}_t=1) = \exp (\gamma^{t} \mathcal{R}(\bs_t, \ba_t))}$. When $p(\mathcal{O}_{1:T} \mid \btau^{i})$ meets specific Lipschitz conditions, the conditional transition probability of the reverse diffusion process can be approximated as \cite{feller2015theory}:
\begin{equation}
\label{eq:guided}
p_\theta(\btau^{i-1} \mid \btau^{i}, \mathcal{O}_{1:T}) \approx \mathcal{N}(\btau^{i-1}; \mu_{\theta} + \alpha\Sigma g, \Sigma)
\end{equation}

\vspace{-20pt}
\begin{align*}
\text{where, } g &= \nabla_{\btau} \log p(\mathcal{O}_{1:T} \mid \btau) |_{\btau = \mu_{\theta}} \\
&= \sum_{t=0}^{T} \gamma^{t} \nabla_{\bs_t,\ba_t} \mathcal{R}(\bs_t, \ba_t) |_{(\bs_t,\ba_t)=\mu_t}
= \nabla_{\btau} \mathcal{J}(\mu_{\theta}).
\end{align*}
\vspace{-.45cm}

Besides, for tasks aiming to satisfy single point conditional constraint (e.g. goal conditioned tasks), the constraint can be simplified by substituting conditional values for the sampled values of all diffusion timesteps $i \in \{0, 1, ..., N\}$.

Although this paradigm has achieved competitive results with previous planning methods which are not based on diffusion models, it only performs conditional guidance during the reverse diffusion process and assumes the unconditional diffusion model is trained perfectly over the forward process. 
However, as depicted in Eq.~\ref{eq:guided}, the quality of generated trajectory $\btau$ depends not only on the guided gradient $g$ but more on the learned means $\mu_{\theta}$ and covariance $\Sigma$ of the unconditional diffusion model. If the learned $\mu_{\theta}$ deviates far from the optimal trajectory, no matter how strong the guidance $g$ is, the final generated result will be highly biased and of low quality. 
Then, learning from Eq.~\ref{eq:diff_loss}, the quality of $\mu_{\theta}$ hinges on the training data, the quality of which, however, is uneven across different tasks, especially on unseen tasks. Previous diffusion-based planning methods have not solved the problem which limits the performance of these methods on both existing and unseen tasks, and thus have poor adaptation ability.

\subsection{Self-evolved Planning with Diffusion}
Therefore, with the aim to improve the adaptability of these planners, we propose \alias, a novel self-evolved decision-making approach based on diffusion probabilistic models, to enhance the quality of the trained means $\mu_{\theta}$ and covariance $\Sigma$ of the forward diffusion process.
\alias relies on self-evolved synthetic data generation to enrich the training dataset which is denoted as $\btau_0$ and synthetic data fine-tuning to boost performance. After that, \alias follows the paradigm depicted in Eq.~\ref{eq:guided} to find the optimal action sequence for the given task with the guidance of reward gradients.

As shown in Figure \ref{fig:main_frame}, to implement \alias, we firstly generate a large number of synthetic demonstration data for unseen tasks which do not exist in the training dataset in order to simulate a wide range of scenarios and behaviors that the diffusion model may encounter in the real world.
This synthetic data is iteratively generated through the sampling process of the original diffusion probabilistic model $\theta_0^*$ with reward guidance, taking the advantage of its great generation ability. We will discuss the details of the synthetic data generation in Section \ref{sec:method_gen} and here we just abbreviate it as a function $\mathcal{G}(\mu_{\theta}, \Sigma, \nabla_{\btau} \mathcal{J}(\mu_{\theta}))$.

Secondly, we design a rule-based discriminator $\mathcal{D}$, with reward and dynamics consistency guidance, to select high-quality data from the generated data pool. Previous sequence modeling methods which predict the rewards $\mathcal{R}(\bs, \ba)$ simultaneously with generated states and actions are unable to solve the dynamics consistency problem that the actual next state with transition model $\bs' = \mathcal{T}(\bs, \ba)$ greatly deviates from the predicted next state. What's more, these deviated trajectories are taken as feasible solutions under previous settings.

To resolve this problem, \alias only takes the state sequence $\bs = \left[\bs_0, \bs_1, ..., \bs_T \right]$ of the generated trajectory and then performs state tracking control using a traditional or neural network-based inverse dynamics model $\mathcal{I}$ to derive real executable actions, denoted as $\widetilde{\ba}_t = \mathcal{I}(\bs_t, \bs_{t+1})$. This step ensures the action that does not violate the robot's dynamic constraints. After that, \alias performs $\widetilde{\ba}_t$ to obtain the revised next state $\widetilde{\bs}_{t+1}=\mathcal{T}\left(\widetilde{\bs}_t, \widetilde{\ba}_t\right)$, and then filters out the trajectories whose revised state $\widetilde{\bs}_{t+1}$ has a too large difference from the generated $\bs_{t+1}$ (measured by MSE $d=||\widetilde{\bs}_{t+1} - \bs_{t+1}||_2$). The remaining trajectories $\widetilde{\bs}$ are then used to predict the reward by $\widetilde{\mathcal{R}}=\mathcal{R}(\widetilde{\bs}, \widetilde{\ba})$ with the new actions $\widetilde{\ba}$ and are selected according to this reward. In this way, we can derive high-quality synthetic data to fine-tune the diffusion probabilistic model. 

%

We repeat this process multiple times in order to continually improve the model's performance and adapt it to new tasks, ultimately improving its generalization performance. So, it can be formulated as,
\begin{equation}
\begin{split}
&\theta_k^* \quad =\arg \min _\theta-\mathbb{E}_{\hat{\btau}_k}\left[\log p_\theta(\hat{\btau}_k | \boldsymbol{y}(\hat{\btau}_k))\right] \\
 &\btau_{k+1} = \mathcal{G}\left(\mu_{\theta_k^*}, \Sigma, \nabla_{\btau} \mathcal{J}(\mu_{\theta_k^*})\right) \\ 
 &\hat{\btau}_{k+1} = [\hat{\btau_k}, \mathcal{D}(\widetilde{\mathcal{R}}(\btau_{k+1})) ]
\end{split}
\end{equation}
where $k \in \{0, 1, ...\}$ is the number of iteration rounds and the initial dataset $\hat{\btau}_0 = \btau_0$.

\subsection{Reward-guided Synthetic Data Generation}
\label{sec:method_gen}
To improve the performance and adaptability of the diffusion probabilistic model on unseen tasks, we need to generate synthetic trajectory data using the learned diffusion model at the current iteration. We achieve it by defining a series of tasks with different goals and reward functions. 

\textbf{Continuous Reward Function.} For the tasks with continuous reward function, represented by MuJoCo \cite{todorov2012mujoco}, we follow the settings that define a binary random variable indicating the optimality with probability mapped from a continuous value, to convert the reward maximization problem to a continuous optimization problem. We can easily take Eq.~\ref{eq:guided} to generate synthetic results.

\textbf{Sparse Reward Function.} The reward function of tasks as typified by a goal-conditioned problem like Maze2D is a unit step function $\mathcal{J}(\btau) = \boldsymbol{\chi}_{\bs_g }(\btau)$ whose value is equal to 1 if and only if the generated trajectory contains the goal state $\bs_g$. 
The gradient of this reward function is Dirac delta function ~\cite{zhang2021dirac} which is not a classical function and cannot be adopted as guidance. However, if it is considered from the perspective of taking the limit, the constraint can be simplified as replacing all corresponding sampled values with constraints over the diffusion timesteps.

\textbf{Combination.} Many realistic tasks need these two sorts of reward functions simultaneously. For example, if there exists an auxiliary task in Maze2D environment that requires the planner to not only find a way from the start point to the goal point but also collect the gold coin in the maze. This task is more difficult and it's infeasible to add this constraint to the sparse reward term because there is no idea about which timestep the generated trajectory should pass the additional reward point (denoted as $\bs_c$). As a solution, we propose to combine these two sorts of methods and define an auxiliary reward guiding function to satisfy the constraints.
\vspace{-4pt}
\begin{equation}
    \mathcal{J}(\btau) = \sum_{t=0}^{T} ||\bs_t - \bs_c||_p
\label{eq:newj}
\vspace{-4pt}
\end{equation}
where $p$ represents p-norm. Then, with Eq.~\ref{eq:newj} we plug it into Eq.~\ref{eq:guided} as the marginal probability density function and force the last state of the generated trajectory $\btau^0$ to be $\bs_c$.
The generated trajectories that meet the desired criteria of the discriminator are added to the set of training data for the diffusion model learning as synthetic expert data. This process is repeated multiple times until a sufficient amount of synthetic data has been generated. By iteratively generating and selecting high-quality data based on the guidance of expected return and dynamics transition constraints, we can boost the performance and enhance the adaptability of the diffusion probabilistic model.

\section{Experiment}
\subsection{Benchmarks}
\textbf{Maze2D}: Maze2D \cite{fu2020d4rl} environment is a navigation task in which a 2D agent needs to traverse from a randomly designated location to a fixed goal location where a reward of 1 is given. No reward shaping is provided at any other location. The objective of this task is to evaluate the ability of offline RL algorithms to combine previously collected sub-trajectories in order to find the shortest path to the evaluation goal. Three maze layouts are available: ``umaze", ``medium", and ``large". The expert data for this task is generated by selecting random goal locations and using a planner to generate sequences of waypoints that are followed by using a PD controller to perform dynamic tracking. We also provide a method to derive more diverse layouts with ChatGPT in Appendix \ref{appendix:chatgpt}.

\textbf{MuJoCo}: 
MuJoCo \cite{todorov2012mujoco} is a physics engine that allows for real-time simulation of complex mechanical systems. It has  three typical tasks: Hopper, HalfCheetah, and Walker2d. Each task has 4 types of datasets to test the performance of an algorithm: ``medium", ``random", ``medium-replay" and ``medium-expert". The ``medium" dataset is created by training a policy with a certain algorithm and collecting $1$M samples. The ``random" dataset is created by using a randomly initialized policy. The ``medium-replay" dataset includes all samples recorded during training until the policy reaches a certain level of performance. There is also a ``medium-expert" dataset which is a mix of expert demonstrations and sub-optimal data.


\textbf{KUKA Robot}: The KUKA Robot \cite{schreiber2010fast} benchmark is a standardized evaluation tool that is self-designed to measure the capabilities of a robot arm equipped with a suction cup at the end. It consists of two tasks: conditional stacking \cite{janner2022planning} and pick-and-place. More details can be seen in Sec. \ref{sec:pick_place}. By successfully completing these tasks, the KUKA Robot benchmark can accurately assess the performance of the robot arm and assist developers in improving its design.

\subsection{Performance Enhancement on Existing Tasks}
\subsubsection{Experiments on Maze2d Environment}
\vspace{-1pt}
\textbf{Overall Performance.}
Navigation in Maze2D environment takes planners hundreds of steps to reach the goal location. Even the best model-free algorithms have to make great efforts to adequately perform credit assignments and reliably reach the target. 
We plan with \alias using the strategy of sparse reward function to condition on the start and goal location. We compare our method with the best model-free algorithms (IQL \citealt{kostrikov2021offline} and CQL \citealt{kumar2020conservative}), conventional trajectory
optimizer MPPI \cite{williams2015model} and previous diffusion-based approach Diffuser \cite{janner2022planning} in Table \ref{table:maze2d}. This comparison is fair because model-free methods can also identify the location of the goal point which is the only state with a non-zero reward.

\begin{table}[t]
\caption{
    \textbf{Offline Reinforcement Learning Performance in Maze2d Environment.} We show the results of \alias and previous planning methods to validate the bootstrapping effect of our method on a goal-conditioned task.
}
\label{table:maze2d}
\vspace{0.2cm}
\centering
\small
\tabcolsep 3pt
\begin{tabular}{cccccc}
\toprule
\multicolumn{1}{c}{\textbf{Environment}} & \textbf{MPPI} & \textbf{CQL} & \textbf{IQL} & \textbf{Diffuser} & \textbf{AdaptDiffuser} \\
\midrule
U-Maze & 33.2 & 5.7 & 47.4 & 113.9 & \textbf{135.1} \scriptsize{\raisebox{1pt}{$\pm 5.8$}}\\
Medium & 10.2 & 5.0 & 34.9 & 121.5 & \textbf{129.9} \scriptsize{\raisebox{1pt}{$\pm 4.6$}}\\
Large  & 5.1 & 12.5 & 58.6 & 123.0 & \textbf{167.9} \scriptsize{\raisebox{1pt}{$\pm 5.0$}}\\
\midrule
\multicolumn{1}{c}{\textbf{Average}} & 16.2 & 7.7 & 47.0 & 119.5 & \textbf{144.3} \hspace{.58cm} \\
\bottomrule
\end{tabular}
\end{table}

\begin{figure}[t]
    \begin{center}
    \centerline{\includegraphics[width=1.02\columnwidth]{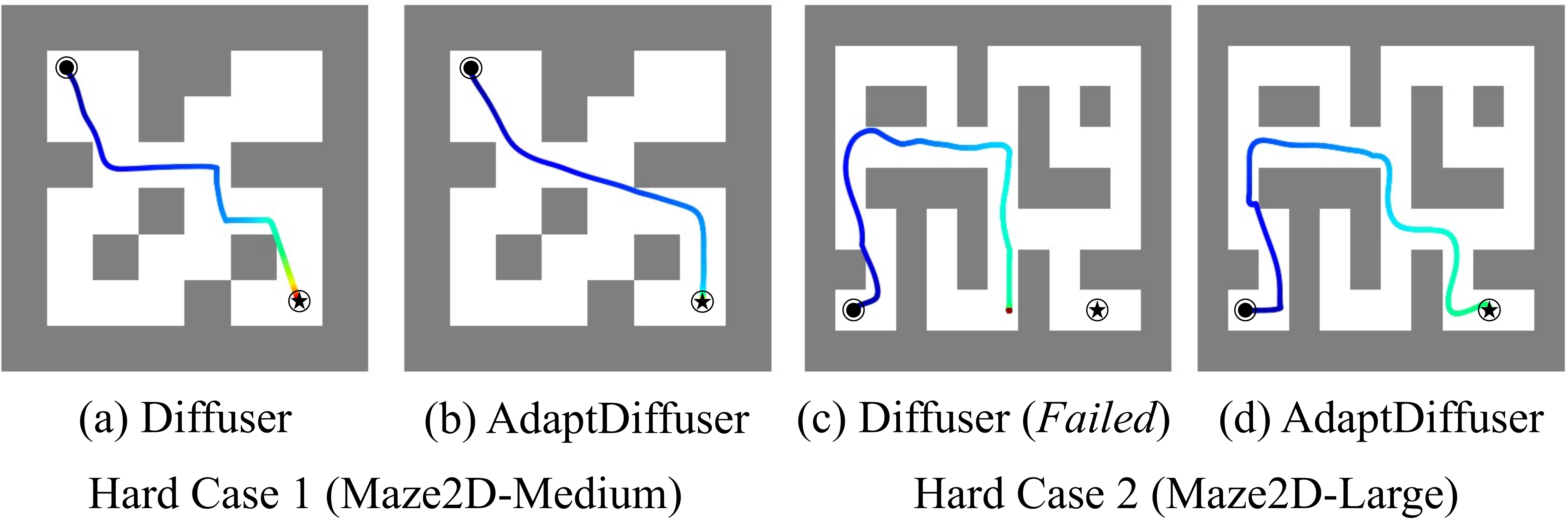}}
    \vspace{-0.05in}
    \caption{\textbf{Hard Cases of Maze2D with Long Planning Path.} Paths are generated in the Maze2D environment with a specified start 
    \protect{\raisebox{-.05cm}{\includegraphics[height=.3cm]{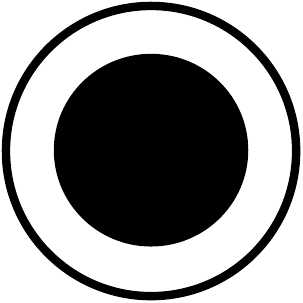}}}
    and goal
    \protect{\raisebox{-.05cm}{\includegraphics[height=.3cm]{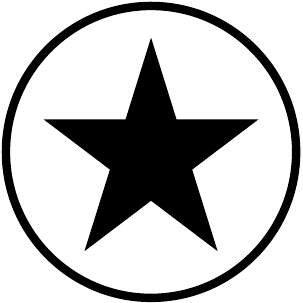}}}
    condition.}
    \label{fig:maze-hard}
    \end{center}
    \vspace{-20pt}
\end{figure}

\begin{table*}[t]
\caption{\small \textbf{Offline Reinforcement Learning Performance in MuJoCo Environment.} 
We report normalized average returns of D4RL tasks \citep{fu2020d4rl} in the table. And the mean and the standard error are calculated over 3 random seeds.
}
\label{table:mujoco}
\vspace{0.2cm}
\centering
\small
\tabcolsep 4.5pt
\begin{tabular}{ccccccccccccc}
\toprule
\textbf{Dataset} & \textbf{Environment} & \textbf{BC} & \textbf{CQL} & \textbf{IQL} & \textbf{DT} & \textbf{TT} & \textbf{MOPO} & \textbf{MOReL} & \textbf{MBOP} & \textbf{Diffuser} &  \textbf{AdaptDiffuser}\\ 
\midrule
Med-Expert & HalfCheetah & $55.2$ & $91.6$ & $86.7$ & $86.8$ & $95.0$ & $63.3$ & $53.3$ & $\textbf{105.9}$ & $88.9$ & $89.6$ \scriptsize{\raisebox{1pt}{$\pm 0.8$}} \\ 
Med-Expert & Hopper & $52.5$ & $105.4$ & $91.5$ & $107.6$ & $\textbf{110.0}$ & $23.7$ & $108.7$ & $55.1$ & $103.3$ & $\textbf{111.6}$ \scriptsize{\raisebox{1pt}{$\pm 2.0$}} \\ 
Med-Expert & Walker2d & $\textbf{107.5}$ & $\textbf{108.8}$ & $\textbf{109.6}$ & $\textbf{108.1}$ & $101.9$ & $44.6$ & $95.6$ & $70.2$ & $\textbf{106.9}$ & 
$\textbf{108.2}$ \scriptsize{\raisebox{1pt}{$\pm 0.8$}}
\\ 
\midrule
Medium & HalfCheetah & $42.6$ & $44.0$ & $\textbf{47.4}$ & $42.6$ & $\textbf{46.9}$ & $42.3$ & $42.1$ & $44.6$ & $42.8$ & $44.2$ \scriptsize{\raisebox{1pt}{$\pm 0.6$}} \\ 
Medium & Hopper & $52.9$ & $58.5$ & $66.3$ & $67.6$ & $61.1$ & $28.0$ & $\textbf{95.4}$ & $48.8$ & $74.3$ & $\textbf{96.6}$ \scriptsize{\raisebox{1pt}{$\pm 2.7$}}
 \\ 
Medium & Walker2d & $75.3$ & $72.5$ & $78.3$ & $74.0$ & $79.0$ & $17.8$ & $77.8$ & $41.0$ & $79.6$ & $\textbf{84.4}$ \scriptsize{\raisebox{1pt}{$\pm 2.6$}} \\ 
\midrule
Med-Replay & HalfCheetah & $36.6$ & $\textbf{45.5}$ & $\textbf{44.2}$ & $36.6$ & $41.9$ & $53.1$ & $40.2$ & $42.3$ & $37.7$ & $38.3$ \scriptsize{\raisebox{1pt}{$\pm 0.9$}}
\\ 
Med-Replay & Hopper & $18.1$ & $95.0$ & $94.7$ & $82.7$ & $91.5$ & $67.5$ & $\textbf{93.6}$ & $12.4$ & $\textbf{93.6}$ &
$\textbf{92.2}$ \scriptsize{\raisebox{1pt}{$\pm 1.5$}} \\ 
Med-Replay & Walker2d & $26.0$ & $77.2$ & $73.9$ & $66.6$ & $82.6$ & $39.0$ & $49.8$ & $9.7$ & $70.6$ & $\textbf{84.7}$ \scriptsize{\raisebox{1pt}{$\pm 3.1$}} \\ 
\midrule
\multicolumn{2}{c}{\textbf{Average}} & 51.9 & 77.6 & 77.0 & 74.7 & 78.9 & 42.1 & 72.9 & 47.8 & 77.5 & \textbf{83.4} \hspace{.58cm} \\
\bottomrule
\end{tabular}
\vspace{-4pt}
\end{table*}

As shown in Table \ref{table:maze2d}, scores achieved by \alias are over 125 in all maze sizes and are 20 points higher than those of Diffuser in average, indicating our method's strong effectiveness in goal-conditioned tasks.

\textbf{Visualization of Hard Cases.}
In order to more intuitively reflect the improvement of our method compared with previous Diffuser \cite{janner2022planning}, we select one difficult planning example of Maze2D-Medium and one of Maze2D-Large respectively for visualization, as shown in Figure \ref{fig:maze-hard}. Among the Maze2D planning paths with sparse rewards, the example with the longest path to be planned is the hardest one. Therefore, in Maze2D-Medium (Fig. \ref{fig:maze-hard} (a) (b)), we designate the start point as (1, 1) with goal point (6, 6), while in Maze2D-Large (Fig. \ref{fig:maze-hard} (c) (d)), we specify the start point as (1, 7) with goal point (9, 7) in the figure.

It can be observed from Fig. \ref{fig:maze-hard} that in Hard Case 1, \alias generates a shorter and smoother path than that generated by Diffuser. So, AdaptDiffuser achieves a larger reward. And in Hard Case 2, previous Diffuser method even fails to plan while our \alias derives a feasible path.

\subsubsection{Experiments on MuJoCo Environment}
\label{sec:mujoco}
MuJoCo tasks are employed to test the performance enhancement of our \alias learned from heterogeneous data of varying quality using the publicly available D4RL datasets \citep{fu2020d4rl}. We evaluate our approach with a number of existing algorithms that cover a variety of data-driven methodologies, including model-free RL algorithms like CQL \cite{kumar2020conservative} and IQL \cite{kostrikov2021offline}; return-conditioning approaches like Decision Transformer (DT) \cite{chen2021decision}; and model-based RL algorithms like Trajectory Transformer (TT) \cite{janner2021offline}, MOPO \cite{yu2020mopo}, MOReL \cite{kidambi2020morel}, and MBOP \cite{argenson2020model}. The results are shown in Table \ref{table:mujoco}. Besides, it is also worth noting that in the MuJoCo environment, the state sequence $\widetilde{\bs}$ derived by taking the generated actions $\ba$ is very close to the generated state sequence $\bs$, so we directly use $\widetilde{\mathcal{R}}(\bs, \ba)=\mathcal{R}(\bs, \ba)$ in this dataset.

\begin{figure}[t]
    \begin{center}
\centerline{\includegraphics[width=0.99\columnwidth]{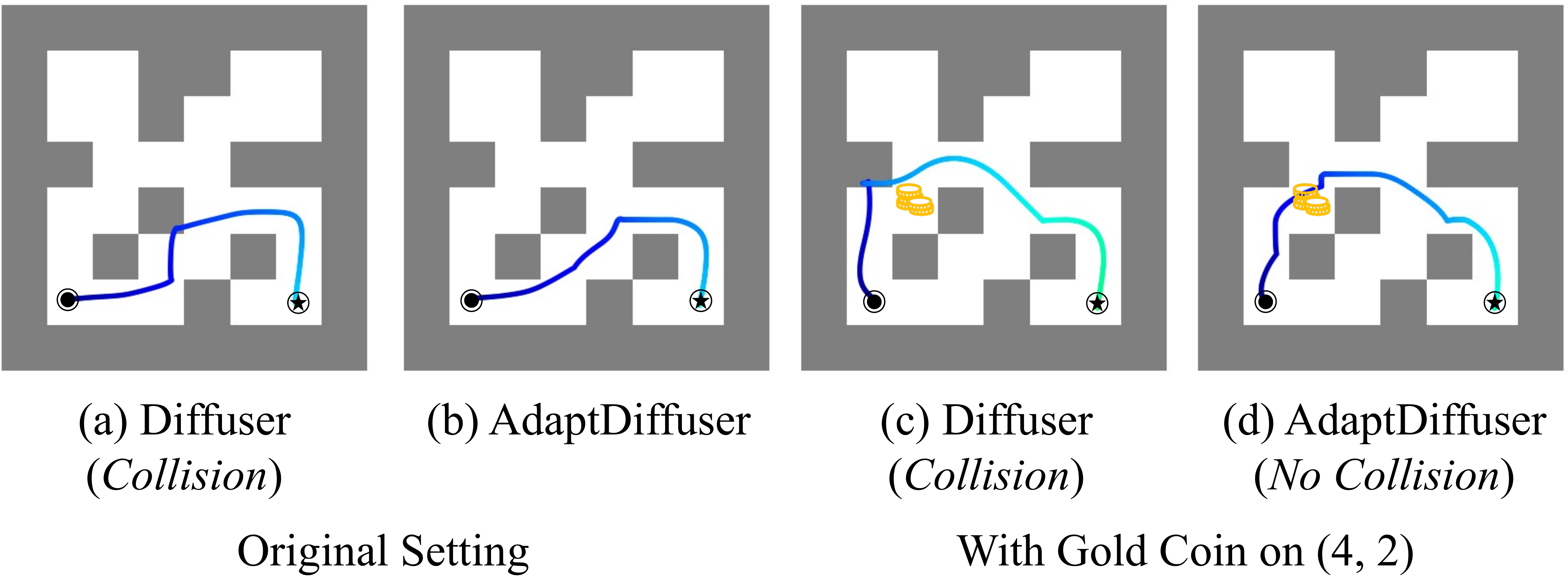}}
    \vspace{-0.05in}
    \caption{\textbf{Maze2d Navigation with Gold Coin Picking Task.} Subfigures (a) (b) show the optimal path when there are no gold coins in the Maze. (The generated routes walk at the bottom of the Maze.) 
    And subfigures (c) (d) add additional reward \protect{\raisebox{-.10cm}{\includegraphics[height=.4cm]{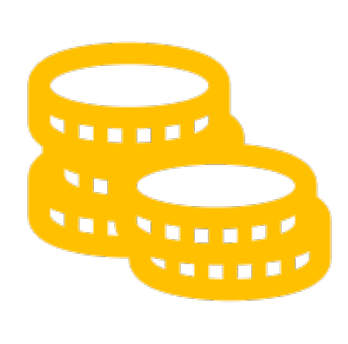}}} in (4,2) position of the Maze. The planners generate new paths that pass through the gold coin as shown in subfigures (c) (d). (The newly generated routes walk in the middle of the maze.)
    }
    \label{fig:maze_adapt}
    \end{center}
    \vspace{-0.3in}
\end{figure}

Observed from the table, our method \alias is either competitive or outperforms most of the offline RL baselines across all three different locomotion settings. And more importantly, compared with Diffuser \cite{janner2022planning}, our method achieves higher reward in almost all the datasets and improves the performance greatly, especially in ``Hopper-Medium" and ``Walker2d-Medium" environments. 
We analyze that this is because the quality of the original data in the ``Medium dataset" is poor, so \alias has an evident effect on improving the quality of the training dataset, thus significantly enhancing the performance of the planner based on the diffusion probabilistic model. 
The results of the ``Medium-Expert" dataset verify this analysis because the quality of original data in the ``Medium-Expert" dataset (especially the Halfcheetah environment) has been good enough, making the generation of new data only has a little gain on the model performance.

\subsection{Adaptation Ability on Unseen Tasks}
\subsubsection{Maze2d with Gold Coin Picking Task}
On top of existing Maze2D settings, we carefully design a new task that requires the agent to navigate as well as pick all gold coins in the maze. We show an example with an additional reward in (4, 2) in Figure \ref{fig:maze_adapt}.

We can see that when there is no additional reward, both Diffuser \cite{janner2022planning} and our method \alias choose the shorter path at the bottom of the figure to reach the goal point. But, when additional reward is added in the (4, 2) position of the maze, both planners change to the path walking in the middle of the figure under the guidance of rewards. However, at this time, the path generated by Diffuser causes the agent to collide with the wall, while \alias generates a smoother collision-free path, reflecting the superiority of our method.

\begin{table}[t]
\caption{
    \textbf{Adaptation Performance on Pick-and-Place Task}
}
\vspace{0.2cm}
\label{table:kukapick}
\tabcolsep 8pt
\centering
\small
\begin{tabular}{ccc}
\toprule
\multicolumn{1}{c}{\textbf{Environment}} & \textbf{Diffuser} & \textbf{AdaptDiffuser} \\
\midrule
Pick and Place setup 1 & 28.16 \scriptsize{\raisebox{1pt}{$\pm 2.0$}} & \textbf{36.03} \scriptsize{\raisebox{1pt}{$\pm 2.1$}} \\
Pick and Place setup 2 & 35.25 \scriptsize{\raisebox{1pt}{$\pm 1.4$}} & \textbf{39.00} \scriptsize{\raisebox{1pt}{$\pm 1.3$}} \\
\midrule
\multicolumn{1}{c}{\textbf{Average}} & 31.71 \hspace{.58cm} & \textbf{37.52} \hspace{.58cm} \\
\bottomrule
\end{tabular}
\vspace{-5pt}
\end{table}

\begin{figure}[t]
    \begin{center}
    \centerline{\includegraphics[width=0.9\columnwidth]{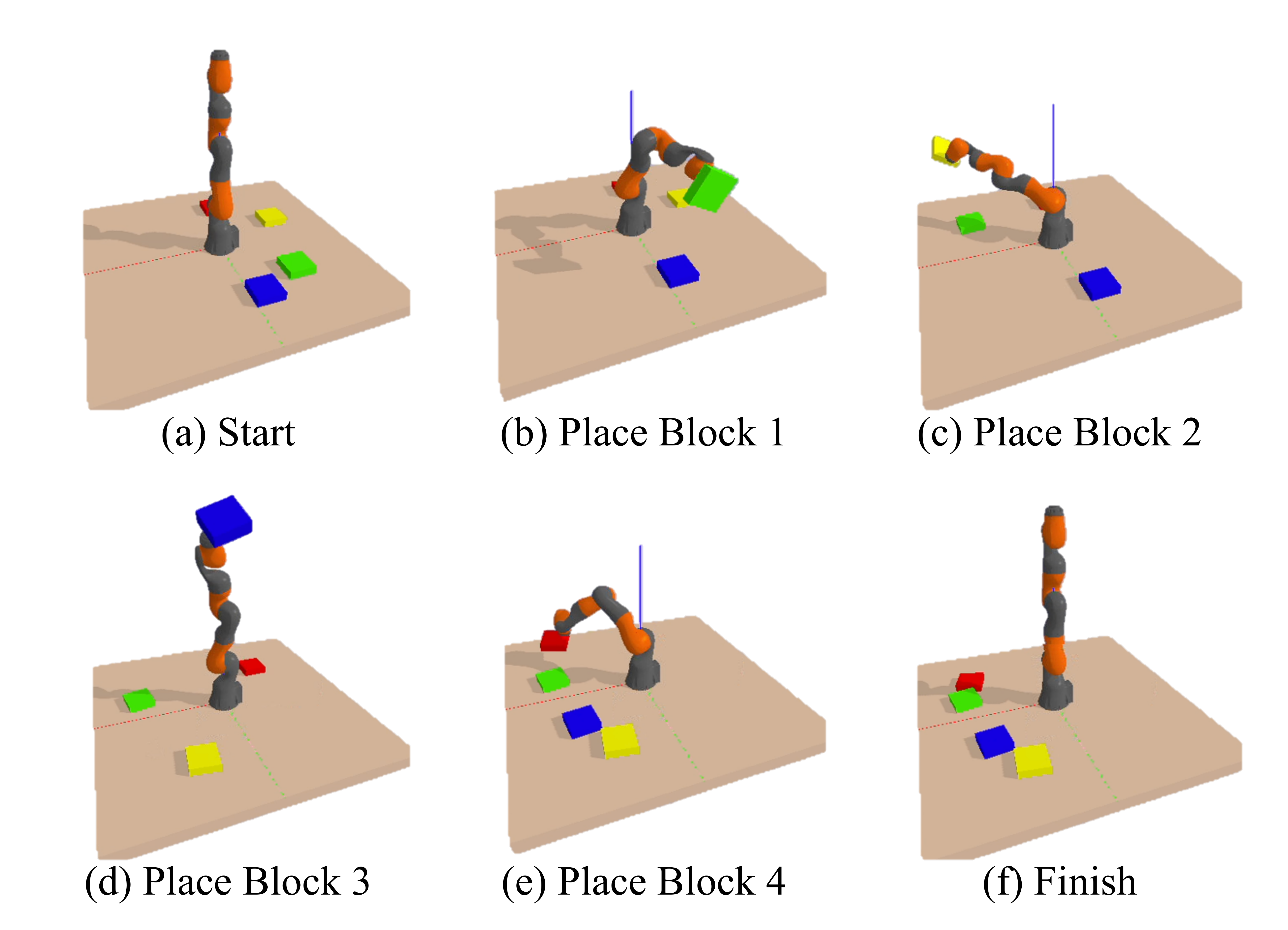}}
    \vspace{-0.05in}
    \caption{\textbf{Visualization of KUKA Pick-and-Place Task.} We require the KUKA Arm to move the blocks from their random initialized positions on the right side of the table to the left and arrange them in the order of yellow, blue, green, and red (from near to far).}
    \label{fig:pick_vis}
    \end{center}
\vspace{-15pt}
\end{figure}

\subsubsection{KUKA Pick and Place Task}
\label{sec:pick_place}
\textbf{Task Specification.}
There are two tasks in the KUKA robot arm environment. One is the conditional stacking task, as defined in \cite{janner2022planning}, where the robot must correctly stack blocks in a predetermined order on a designated location, using blocks that have been randomly placed. 
And the other is the pick-and-place task designed by us, which aims to place the randomly initialized blocks in their own target locations in a predetermined order. The reward functions of both tasks are defined as one upon successful placements and zero otherwise.

To test the adaptation capability of \alias and other baselines, we only provide expert trajectory data for the conditional stacking task, which is generated by PDDLStream \cite{garrett2020pddlstream}, but we require the planner to generalize to pick-and-put task without any expert data. The performance of the pick-and-place task is supposed to be a good measure of the planner's adaptability.



\textbf{Adaptation Performance.}
In KUKA pick-and-place task, we define the guidance of the conditional diffusion model as the gradient of the reward function about the distance between the current location and the target location. Then, the adaptation performance is displayed in Table \ref{table:kukapick}.

There are two setups in KUKA benchmark. In setup 1, the four blocks are initialized randomly on the floor, while in setup 2, the four blocks are stacked at a random location at the beginning.
As shown in Table \ref{table:kukapick}, \alias outperforms Diffuser greatly on both setups while achieving higher performance at setup 2 because all of the blocks start from the same horizontal position. We visualize a successful case of the KUKA pick-and-place task in Figure \ref{fig:pick_vis}, and more visualization results can be seen in Appendix \ref{appedix:vis_kuka}.

\subsection{Ablation Study}

\subsubsection{Ablation on Iterative Phases}
In order to verify the lifting effect of iterative data generation of our method \alias to improve the performance of the planner, we conduct an ablation experiment on the number of iterative phases of \alias in the MuJoCo environment of D4RL.
\begin{table}[t]
\caption{
\textbf{Ablation on Iterative Phases.} The mean and the standard error are calculated over 3 random seeds.
}
\label{table:mujoco_iterphases}
\vspace{0.2cm}
\centering
\footnotesize
\begin{tabular}{cccc}
\toprule
\textbf{Dataset} & \textbf{Environment} & \textbf{$1^{\text{st}}$ Phase} & \textbf{$2^{\text{nd}}$ Phase} \\ 
\midrule
Medium-Expert & HalfCheetah & $\textbf{89.3}$ \scriptsize{\raisebox{1pt}{$\pm 0.6$}} & $\textbf{89.6}$ \scriptsize{\raisebox{1pt}{$\pm 0.8$}} \\ 
Medium-Expert & Hopper & $\textbf{110.7}$ \scriptsize{\raisebox{1pt}{$\pm 3.2$}} & $\textbf{111.6}$ \scriptsize{\raisebox{1pt}{$\pm 2.0$}}  \\ 
Medium-Expert & Walker2d & $\textbf{107.7}$ \scriptsize{\raisebox{1pt}{$\pm 0.9$}} & $\textbf{108.2}$ \scriptsize{\raisebox{1pt}{$\pm 0.8$}}\\ 
\midrule
Medium & HalfCheetah & $\textbf{43.8}$ \scriptsize{\raisebox{1pt}{$\pm 0.5$}} & $\textbf{44.2}$ \scriptsize{\raisebox{1pt}{$\pm 0.6$}} \\ 
Medium & Hopper & $95.4$ \scriptsize{\raisebox{1pt}{$\pm 3.4$}} & $\textbf{96.6}$ \scriptsize{\raisebox{1pt}{$\pm 2.7$}} \\ 
Medium & Walker2d & $83.2$ \scriptsize{\raisebox{1pt}{$\pm 3.5$}} & $\textbf{84.4}$ \scriptsize{\raisebox{1pt}{$\pm 2.6$}} \\ 
\midrule
\multicolumn{2}{c}{\textbf{Average}} & 88.4 \hspace{.58cm} & \textbf{89.1} \hspace{.58cm} \\
\bottomrule
\end{tabular}
\vspace{-10pt}
\end{table}

As shown in Table \ref{table:mujoco_iterphases}, with ``Medium" dataset, due to the low quality of the original dataset, although the data generated in the first phase has greatly supplemented the training dataset and greatly improved the performance (referring to Sec \ref{sec:mujoco}), the performance achieved after the second phase is still significantly improved compared with that of the first phase. However, for ``Medium-Expert" dataset, because the expert data of the dataset has covered most of the environment, and the newly generated data is only more suitable for the planner to learn. So, after a certain improvement in the first phase, the subsequent growth is not obvious. The above experiments verify the effectiveness of \alias for the multi-phase iterative paradigm, and also show that the boosting effect is no longer obvious after the algorithm performance reaches a certain level.


\subsubsection{Ablation on Insufficient Data \& Training}
\label{ablation:limit_data}
To demonstrate the superiority of our method over previous diffusion-based work Diffuser~\cite{janner2022planning} when the expert data is limited and the training is insufficient,
we conducted experiments on the Maze2d-Large dataset using different percentages of expert data (e.g. 20\%, 50\%) with only 25\% training steps to train our model.
The results are shown in Table~\ref{tab:ablation_amount}. 
The setting 100\%$\mathcal{D}$ denotes the full training setting.
We can see our \alias, which uses only 50\% data and 25\% training steps, beats the fully trained Diffuser.
\alias can achieve good performance with a small amount of expert data and training steps.

\begin{table}[t]
\caption{\textbf{Ablation study on different amounts of expert data.}}
\vspace{0.2cm}
\centering
\footnotesize
\tabcolsep 8pt
\begin{tabular}{ccccc}
\toprule
   \textbf{Amount of Data} & \textbf{20\% $\mathcal{D}$}   & \textbf{50\% $\mathcal{D}$} & \textbf{100\%$\mathcal{D}$} \\
\midrule
    Diffuser      & 105.0  & 107.9  & 123.0  \\
    AdaptDiffuser & \textbf{112.5}  & \textbf{123.8}  & \textbf{167.9}  \\      
\bottomrule
\end{tabular}
\label{tab:ablation_amount}
\vspace{-5pt}
\end{table}

\subsubsection{Model Size and Running Time}
We show the model size of \alias measured by the number of parameters in Table \ref{table:model_size} here. And we also analyze the testing time and training time performance in Appendix \ref{appendix:time}. From the analysis, we can see that the inference time of \alias is almost equal to that of Diffuser \citep{janner2022planning}.

\begin{table}[t]
\caption{\small \textbf{Model Size of \alias.}}
\label{table:model_size}
\vspace{0.2cm}
\centering
\small
\begin{tabular}{cc}
\toprule
\textbf{Environment} & \textbf{Total Parameters (Model Size)} \\
\midrule
MuJoCo & 3.96 M \\
Maze2D & 3.68 M \\
KUKA Robot & 64.9 M \\
\bottomrule
\end{tabular}
\vspace{-5pt}
\end{table}

\section{Conclusion}

We present \alias, a method for improving the performance of diffusion-based planners in offline reinforcement learning through self-evolution. By generating diverse, high-quality and heterogeneous expert data using a reward-guided diffusion model and filtering out infeasible data using a rule-based discriminator, \alias is able to enhance the performance of diffusion models in existing decision-making tasks, especially the goal-conditioned tasks, and further improve the adaptability in unseen tasks without any expert data. Our experiments on two widely-used offline RL benchmarks and our carefully designed unseen tasks in KUKA and Maze2D environments validate the effectiveness of \alias. 

\textbf{Discussion of Limitation.} Our method achieves better performance by generating high-quality synthetic data but increases the amount of computation required in training with almost no increase in inference time. Besides, although AdaptDiffuser has proven its effectiveness in several scenarios (e.g. MuJoCo, Maze2d, KUKA), it still faces challenges in high-dimensional observation space tasks. More detailed discussions are given in Appendix \ref{appendix:discuss}.

\textbf{Future works.} 
Further improving the sampling speed and exploring tasks with high-dimensional input are potential areas for future works. 
And with the help of ChatGPT \cite{ouyang2022training}, we can use prompts to directly generate diverse maze settings to assist synthetic data generation which is also a promising direction. We provide some examples in Appendix \ref{appendix:chatgpt}.



\section*{Acknowledgements}

This paper is partially supported by the National Key R\&D Program of China No.2022ZD0161000 and the General Research Fund of Hong Kong No.17200622.



\bibliographystyle{icml2023}

\newpage
\appendix
\onecolumn
\section{Classifier-Guided Diffusion Model for Planning}
\label{appedix:classifier}

In this section, we introduce theoretical analysis of conditional diffusion model in detail. We start with an unconditional diffusion probabilistic model with a standard reverse process as $p_{\theta}(\tau^i|\tau^{i+1})$. Then, with a specific label $y$ (for example, goal point in Maze2D or specific reward function in MuJoCo) which is to be conditioned on given a noised trajectory $\tau^i$, the reverse diffusion process can be redefined as $p_{\theta,\phi}(\tau^i|\tau^{i+1},y)$. Apart from the parameters $\theta$ of original diffusion model, a new parameter $\phi$ is introduced here which describes the probability transfer model from noisy trajectory $\tau^i$ to the specific label $y$ which is denoted as $p_{\phi}(y \mid \tau^i)$.

\begin{lemma}
    The marginal probability of a conditional Markov's noising process $q$ conditioned on $y$ is equal to the marginal probability of the unconditional noising process.
\begin{equation}
    q\left(\tau^{i+1} \mid \tau^{i}\right) = q\left(\tau^{i+1} \mid \tau^{i}, y\right)
\end{equation}
\end{lemma}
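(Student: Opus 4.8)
The plan is to use the defining property of the forward (noising) process in a diffusion model: the process $q(\tau^{0:N})$ is a fixed Markov chain that does \emph{not} depend on the label $y$. The key structural fact to exploit is that conditioning on $y$ only enters through the data distribution $q(\tau^0 \mid y)$, while each forward transition $q(\tau^{i+1}\mid \tau^i)$ is defined independently of everything except $\tau^i$ — it is the prescribed Gaussian corruption kernel. So I would first write down the conditional independence assumption explicitly: $y \perp \tau^{i+1} \mid \tau^i$ in the forward process, i.e. once the current noised trajectory $\tau^i$ is known, adding one more step of noise to obtain $\tau^{i+1}$ cannot depend on the label. This is exactly the Markov structure of the forward chain extended with $y$ attached only to $\tau^0$.

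Concretely, I would carry out the following steps. First, factor the joint using the Markov chain $y \to \tau^0 \to \tau^1 \to \cdots \to \tau^N$, so that $q(\tau^{i+1}, \tau^i, y) = q(y)\, q(\tau^{0:i}\mid y)\, q(\tau^{i+1}\mid \tau^i)$ after marginalizing intermediate variables, where crucially the last factor has no $y$-dependence by construction of the noising kernel. Second, apply Bayes' rule:
\begin{equation}
q(\tau^{i+1}\mid \tau^i, y) = \frac{q(\tau^{i+1},\tau^i,y)}{q(\tau^i, y)} = \frac{q(\tau^{i+1}\mid \tau^i)\, q(\tau^i, y)}{q(\tau^i, y)} = q(\tau^{i+1}\mid \tau^i),
\end{equation}
where the middle equality uses the conditional independence $q(\tau^{i+1}\mid \tau^i, y) = q(\tau^{i+1}\mid \tau^i)$ that comes from the chain structure. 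Alternatively, and perhaps more cleanly for the paper, I would marginalize over $\tau^0$ (and all earlier steps): $q(\tau^{i+1}\mid \tau^i, y) = \int q(\tau^{i+1}\mid \tau^i, \tau^0, y)\, q(\tau^0 \mid \tau^i, y)\, \mathrm{d}\tau^0$, then note $q(\tau^{i+1}\mid \tau^i,\tau^0,y) = q(\tau^{i+1}\mid \tau^i)$ by the Markov property of the forward chain (the next noised sample depends only on the immediately preceding one), pull that factor out of the integral, and use $\int q(\tau^0\mid \tau^i,y)\,\mathrm{d}\tau^0 = 1$ to conclude.

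The main obstacle — really the only subtlety — is justifying the conditional independence step rigorously rather than asserting it: one must be explicit that the \emph{forward} process is a design choice (a fixed sequence of Gaussian convolutions) and therefore $q(\tau^{i+1}\mid \tau^i)$ is literally defined without reference to $y$, so the claim is essentially true by definition of how $y$ is incorporated (only at $\tau^0$). I would state this as a modeling assumption up front. Everything else is a one-line Bayes' rule or marginalization computation. I would keep the write-up to the marginalization argument, since it makes the Markov property usage most transparent and avoids any division-by-zero worries that the Bayes' rule form nominally raises.
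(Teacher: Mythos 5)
Your proof is correct, but it reaches the conclusion by a different marginalization than the paper. The paper's proof integrates over $y$: it writes $q(\tau^{i+1}\mid\tau^i)=\int_y q(\tau^{i+1}\mid\tau^i,y)\,p_\phi(y\mid\tau^i)\,dy$, pulls $q(\tau^{i+1}\mid\tau^i,y)$ outside the integral on the grounds that it is ``$y$-independent according to its definition,'' and uses $\int_y p_\phi(y\mid\tau^i)\,dy=1$. That key step essentially asserts the conclusion as a definitional fact and then confirms that the $y$-independent value must equal the unconditional marginal. You instead integrate over $\tau^0$ (and earlier steps) and derive the $y$-independence from a more primitive modeling assumption: the augmented chain $y\to\tau^0\to\tau^1\to\cdots\to\tau^N$ is Markov with a forward kernel that is a fixed Gaussian corruption, so $q(\tau^{i+1}\mid\tau^i,\tau^0,y)=q(\tau^{i+1}\mid\tau^i)$, and the claim follows from $\int q(\tau^0\mid\tau^i,y)\,\mathrm{d}\tau^0=1$. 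Your version is arguably the more careful one, since it makes explicit \emph{where} the label enters the joint distribution (only at $\tau^0$) rather than taking the $y$-independence of the conditional kernel as given; the paper's version is shorter and matches the standard classifier-guidance derivation. One caution: your alternative Bayes'-rule computation, as you note yourself, uses the very conditional independence it is meant to establish, so you are right to discard it in favor of the marginalization argument.
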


\begin{proof}
\begin{equation*}
    \begin{split}
    q\left(\tau^{i+1} \mid \tau^{i}\right) & =\int_{y} q\left(\tau^{i+1}, y \mid \tau^{i}\right) d y 
    \\
    & =\int_{y} q\left(\tau^{i+1} \mid \tau^{i}, y\right) p_{\phi}\left(y \mid \tau^{i}\right) d y 
    \\
    & =q\left(\tau^{i+1} \mid \tau^{i}, y\right) \int_{y} p_{\phi}\left(y \mid \tau^{i}\right) d y 
    \\
    & =q\left(\tau^{i+1} \mid \tau^{i}, y\right)
\end{split}
\end{equation*}
The third line holds because $q\left(\tau^{i+1} \mid \tau^{i}, y\right)$ fits another $y$-independent transition probability according to its definition.
\end{proof}

\begin{lemma}
The probability distribution of specific label $y$ conditioned on $\tau^i$ does not depend on $\tau^{i+1}$.
\begin{equation}
    p_{\theta, \phi}\left(y \mid \tau^{i}, \tau^{i+1}\right) = p_{\phi}\left(y \mid \tau^{i}\right)
\end{equation}
\end{lemma}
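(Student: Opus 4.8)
The plan is to establish the Markov-style conditional independence $p_{\theta,\phi}(y \mid \tau^i, \tau^{i+1}) = p_\phi(y \mid \tau^i)$ directly from the structure of the joint generative model, mirroring the strategy used for the first lemma. The key observation is that the extra parameter $\phi$ only enters through the classifier term $p_\phi(y \mid \tau^i)$, which by construction sees the trajectory only at diffusion timestep $i$; the reverse transition $p_\theta(\tau^i \mid \tau^{i+1})$ carries no information about $y$. So I would first write down Bayes' rule for the quantity of interest:
\begin{equation*}
p_{\theta,\phi}\left(y \mid \tau^{i}, \tau^{i+1}\right) = \frac{p_{\theta,\phi}\left(\tau^{i}, \tau^{i+1}, y\right)}{p_{\theta}\left(\tau^{i}, \tau^{i+1}\right)}.
\end{equation*}

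Next I would factor the numerator according to the generative model. Reading the dependency chain $\tau^{i+1} \to \tau^{i} \to y$, the joint splits as
\begin{equation*}
p_{\theta,\phi}\left(\tau^{i}, \tau^{i+1}, y\right) = p\left(\tau^{i+1}\right)\, p_{\theta}\left(\tau^{i} \mid \tau^{i+1}\right)\, p_{\phi}\left(y \mid \tau^{i}\right),
\end{equation*}
where crucially the last factor depends on $\tau^i$ alone. Dividing by $p_\theta(\tau^i, \tau^{i+1}) = p(\tau^{i+1})\, p_\theta(\tau^i \mid \tau^{i+1})$ cancels the first two factors and leaves exactly $p_\phi(y \mid \tau^i)$. (One can equivalently run the argument in the forward direction using Lemma~1: since $q(\tau^{i+1} \mid \tau^i, y) = q(\tau^{i+1} \mid \tau^i)$, the variable $\tau^{i+1}$ is conditionally independent of $y$ given $\tau^i$, and the stated identity is just the definition of this conditional independence rewritten via Bayes' rule.)

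The main obstacle is not computational but conceptual: it is making precise \emph{why} the factorization $p_{\theta,\phi}(\tau^i,\tau^{i+1},y) = p(\tau^{i+1})\, p_\theta(\tau^i\mid\tau^{i+1})\, p_\phi(y\mid\tau^i)$ is the right one — i.e., justifying that $y$ attaches to the model only through $p_\phi(y \mid \tau^i)$ and not through any term involving $\tau^{i+1}$. This is a modeling assumption baked into how the classifier-guided reverse process $p_{\theta,\phi}(\tau^i \mid \tau^{i+1}, y)$ was defined at the start of the appendix, so the proof should point back to that definition rather than prove it from more primitive axioms. Once that factorization is granted, the remaining steps are a one-line cancellation, so I would keep the written proof to three or four lines of aligned equations with a brief sentence explaining the factorization.
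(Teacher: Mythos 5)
Your proposal is correct, and your parenthetical remark is in fact the paper's entire proof: the paper writes Bayes' rule in the forward direction, $p_{\theta,\phi}(y \mid \tau^i, \tau^{i+1}) = q(\tau^{i+1} \mid \tau^i, y)\, p_\phi(y \mid \tau^i) / q(\tau^{i+1} \mid \tau^i)$, and then cancels the two $q$-terms by invoking Lemma~1. Your primary route instead factors the joint along the reverse chain $\tau^{i+1} \to \tau^i \to y$ and cancels; this is a clean one-line cancellation, but as you yourself flag, the factorization $p_{\theta,\phi}(\tau^i,\tau^{i+1},y) = p(\tau^{i+1})\,p_\theta(\tau^i \mid \tau^{i+1})\,p_\phi(y \mid \tau^i)$ with the last factor depending on $\tau^i$ alone \emph{is} the conditional independence $y \perp \tau^{i+1} \mid \tau^i$ being asserted, so taken on its own this route is close to assuming the conclusion as a modeling postulate. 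The paper's route (and your parenthetical) is preferable because it derives that independence from something with genuine content, namely Lemma~1's statement that the forward noising kernel $q(\tau^{i+1} \mid \tau^i, y)$ does not depend on $y$; that is where the actual work lives, and this lemma is then a pure Bayes-rule consequence. If you write this up, I would promote the parenthetical to the main argument and keep the joint-factorization reading as the intuition for why the chain structure makes the result unsurprising.
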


\begin{proof}
\begin{equation*}
    \begin{split}
    p_{\theta, \phi}\left(y \mid \tau^{i}, \tau^{i+1}\right) &=q\left(\tau^{i+1} \mid \tau^{i}, y\right) \frac{p_{\phi}\left(y \mid \tau^{i}\right)}{q\left(\tau^{i+1} \mid \tau^{i}\right)} 
    \\
    & =q\left(\tau^{i+1} \mid \tau^{i}\right) \frac{p_{\phi}\left(y \mid \tau^{i}\right)}{q\left(\tau^{i+1} \mid \tau^{i}\right)} 
    \\
    & =p_{\phi}\left(y \mid \tau^{i}\right)
\end{split}
\end{equation*}
\end{proof}

\begin{theorem}
The conditional sampling probability $p_{\theta,\phi}(\tau^i \mid \tau^{i+1},y)$ is proportional to unconditional transition probability $p_{\theta}(\tau^i \mid \tau^{i+1})$ multiplied by classified probability $p_{\phi}(y \mid \tau^i)$.
\begin{equation}
    p_{\theta,\phi}(\tau^i \mid \tau^{i+1},y) = Z p_{\theta}(\tau^i \mid \tau^{i+1})p_{\phi}(y \mid \tau^i)
\label{eq:theorem1}
\end{equation}
\end{theorem}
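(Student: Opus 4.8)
The plan is to apply Bayes' rule to the conditional reverse transition, treating $\tau^{i+1}$ and $y$ as the conditioning events and $\tau^i$ as the quantity of interest, and then to eliminate the awkward terms using the two lemmas just established. First I would write
\begin{equation*}
p_{\theta,\phi}(\tau^i \mid \tau^{i+1}, y) = \frac{p_{\theta,\phi}(\tau^i, y \mid \tau^{i+1})}{p_{\theta,\phi}(y \mid \tau^{i+1})} = \frac{p_{\theta}(\tau^i \mid \tau^{i+1})\, p_{\theta,\phi}(y \mid \tau^i, \tau^{i+1})}{p_{\theta,\phi}(y \mid \tau^{i+1})},
\end{equation*}
where the second equality is the chain rule on the joint $p_{\theta,\phi}(\tau^i, y \mid \tau^{i+1})$, and the factor $p_{\theta}(\tau^i \mid \tau^{i+1})$ appears because conditioning $\tau^i$ on $\tau^{i+1}$ alone involves only the original reverse-process parameters $\theta$.

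Next I would invoke Lemma~2 to replace $p_{\theta,\phi}(y \mid \tau^i, \tau^{i+1})$ by $p_{\phi}(y \mid \tau^i)$, since the label's distribution given $\tau^i$ does not further depend on $\tau^{i+1}$. The remaining denominator $p_{\theta,\phi}(y \mid \tau^{i+1})$ does not depend on $\tau^i$ at all, so it is a constant with respect to the variable we are describing the density over; I would absorb it into the normalizing constant and name it $Z \coloneqq 1/p_{\theta,\phi}(y \mid \tau^{i+1})$. Collecting terms gives exactly
\begin{equation*}
p_{\theta,\phi}(\tau^i \mid \tau^{i+1}, y) = Z\, p_{\theta}(\tau^i \mid \tau^{i+1})\, p_{\phi}(y \mid \tau^i),
\end{equation*}
which is the claimed identity.

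The main subtlety — not a deep obstacle, but the step that needs care — is justifying that conditioning $\tau^i$ on $\tau^{i+1}$ in the combined model reduces to the unconditional reverse transition $p_{\theta}(\tau^i \mid \tau^{i+1})$ rather than something depending on $\phi$. This is exactly the content of Lemma~1 read in the reverse direction: the noising kernel $q$ is unchanged by conditioning on $y$, and since the generative model $p_\theta$ is defined to approximate the reverse of that $q$, the $\phi$-classifier enters only through the label channel and not through the state-to-state transition. I would state this explicitly as the reason the chain rule produces a pure-$\theta$ factor. The rest is bookkeeping: check that $Z$ is indeed independent of $\tau^i$ (immediate, since it is $p_{\theta,\phi}(y\mid\tau^{i+1})$), and note that $Z$ is determined by the requirement that the right-hand side integrate to one over $\tau^i$, which is why it can legitimately be written as a single constant.
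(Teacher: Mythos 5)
Your proposal is correct and follows essentially the same route as the paper's proof: both expand the joint via the chain rule into $p_{\theta}(\tau^i\mid\tau^{i+1})\,p_{\theta,\phi}(y\mid\tau^i,\tau^{i+1})$ divided by $p_{\phi}(y\mid\tau^{i+1})$, invoke Lemma~A.2 to drop the dependence on $\tau^{i+1}$ in the classifier term, and absorb the $\tau^i$-independent denominator into the constant $Z$. The only cosmetic difference is that the paper starts from the three-way joint $p_{\theta,\phi}(\tau^i,\tau^{i+1},y)$ and cancels a $p_{\theta}(\tau^{i+1})$ factor, whereas you condition on $\tau^{i+1}$ from the outset.
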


\begin{proof}
\begin{equation}
    \begin{split}
p_{\theta,\phi}(\tau^i \mid \tau^{i+1},y) &= \frac{p_{\theta, \phi}\left(\tau^{i}, \tau^{i+1}, y\right)}{p_{\theta, \phi}\left(\tau^{i+1}, y\right)} 
\\
& =\frac{p_{\theta, \phi}\left(\tau^{i}, \tau^{i+1}, y\right)}{p_{\phi}\left(y \mid \tau^{i+1}\right) p_{\theta}\left(\tau^{i+1}\right)} 
\\
& =\frac{p_{\theta}\left(\tau^{i} \mid \tau^{i+1}\right) p_{\theta, \phi}\left(y \mid \tau^{i}, \tau^{i+1}\right) p_{\theta}\left(\tau^{i+1}\right)}{p_{\phi}\left(y \mid \tau^{i+1}\right) p_{\theta}\left(\tau^{i+1}\right)} 
\\
& =\frac{p_{\theta}\left(\tau^{i} \mid \tau^{i+1}\right) p_{\theta, \phi}\left(y \mid \tau^{i}, \tau^{i+1}\right)}{p_{\phi}\left(y \mid \tau^{i+1}\right)} 
\\
& =\frac{p_{\theta}\left(\tau^{i} \mid \tau^{i+1}\right) p_{\phi}\left(y \mid \tau^{i}\right)}{p_{\phi}\left(y \mid \tau^{i+1}\right)} 
\end{split}
\end{equation}

The term $p_{\phi}\left(y \mid \tau^{i+1}\right)$ can be seen as a constant since it's not conditioned on $\tau^{i}$ at the diffusion timestep $i$.
\end{proof}


Although exact sampling from this distribution (Equation \ref{eq:theorem1}) is difficult, \cite{sohl2015deep} demonstrates that it can be approximated as a modified Gaussian distribution. We show the derivation here.

On one hand, as Equation \ref{eq:p1} shows, we can formulate the denoising process with a Gaussian distribution:
\begin{align}
    p_{\theta}(\tau^i \mid \tau^{i+1}) &= \mathcal{N}(\mu, \Sigma) \\
    \log p_{\theta}(\tau^i \mid \tau^{i+1}) &= -\frac{1}{2}(\tau^i - \mu)^T \Sigma^{-1} (\tau^{i} - \mu) + C
    \label{eq:tt1}
\end{align}

And on the other hand, the number of diffusion steps are usually large, so the difference between $\tau^i$ and $\tau^{i+1}$ is small enough. We can apply Taylor expansion around $\tau^i=\mu$ to $\log p_{\phi}(y \mid \tau^i)$ as,
\begin{equation}
    \log p_{\phi}\left(y \mid \tau^{i}\right) = \log p_{\phi}\left(y \mid \tau^{i}\right)|_{\tau^{i}=\mu}+\left.\left(\tau^{i}-\mu\right) \nabla_{\tau^{i}} \log p_{\phi}\left(y \mid \tau^{i}\right)\right|_{\tau^{i}=\mu}
    \label{eq:tt2}
\end{equation}

Therefore, synthesize Equation \ref{eq:tt1} and \ref{eq:tt2}, we derive,
\begin{equation}
    \begin{split}
        \log p_{\theta,\phi}(\tau^i|\tau^{i+1},y) &= \log p_{\theta}(\tau^i|\tau^{i+1}) + \log p_{\phi}(y|\tau^i)+C_1
        \\
         & = -\frac{1}{2}\left(\tau^{i}-\mu\right)^{T} \Sigma^{-1}\left(\tau^{i}-\mu\right)+\left(\tau^{i}-\mu\right) \nabla \log p_{\phi}\left(y \mid \tau^{i}\right) +C_{2}
        \\
        & =-\frac{1}{2}\left(\tau^{i}-\mu-\Sigma \nabla \log p_{\phi}\left(y \mid \tau^{i}\right)\right)^{T} \Sigma^{-1}\left(\tau^{i}-\mu-\Sigma \nabla \log p_{\phi}\left(y \mid \tau^{i}\right)\right)+C_{3} 
    \end{split}
\end{equation}
which means,
\begin{equation}
    p_{\theta,\phi}(\tau^i|\tau^{i+1},y) \approx \mathcal{N}(\tau^i; \mu + \Sigma \nabla_{\tau} \log p_{\phi}\left(y \mid \tau^{i}\right), \Sigma)
\end{equation}
And it's equal to Equation \ref{eq:guided}. Proven.

\newpage
\section{Visualization Results of KUKA Pick-and-Place Task}
\label{appedix:vis_kuka}

In this section, we show more visualization results about KUKA pick-and-place task. We require the KUKA Robot Arm to pick green, yellow, blue and red blocks with random initialized positions on the right side of the table one by one and move them to the left side in the order of yellow, blue, green and red (from near to far).

\subsection{Pick and Place 1st Green Block}
\begin{figure}[!htb]
    \begin{center}
    \centerline{\includegraphics[width=\linewidth]{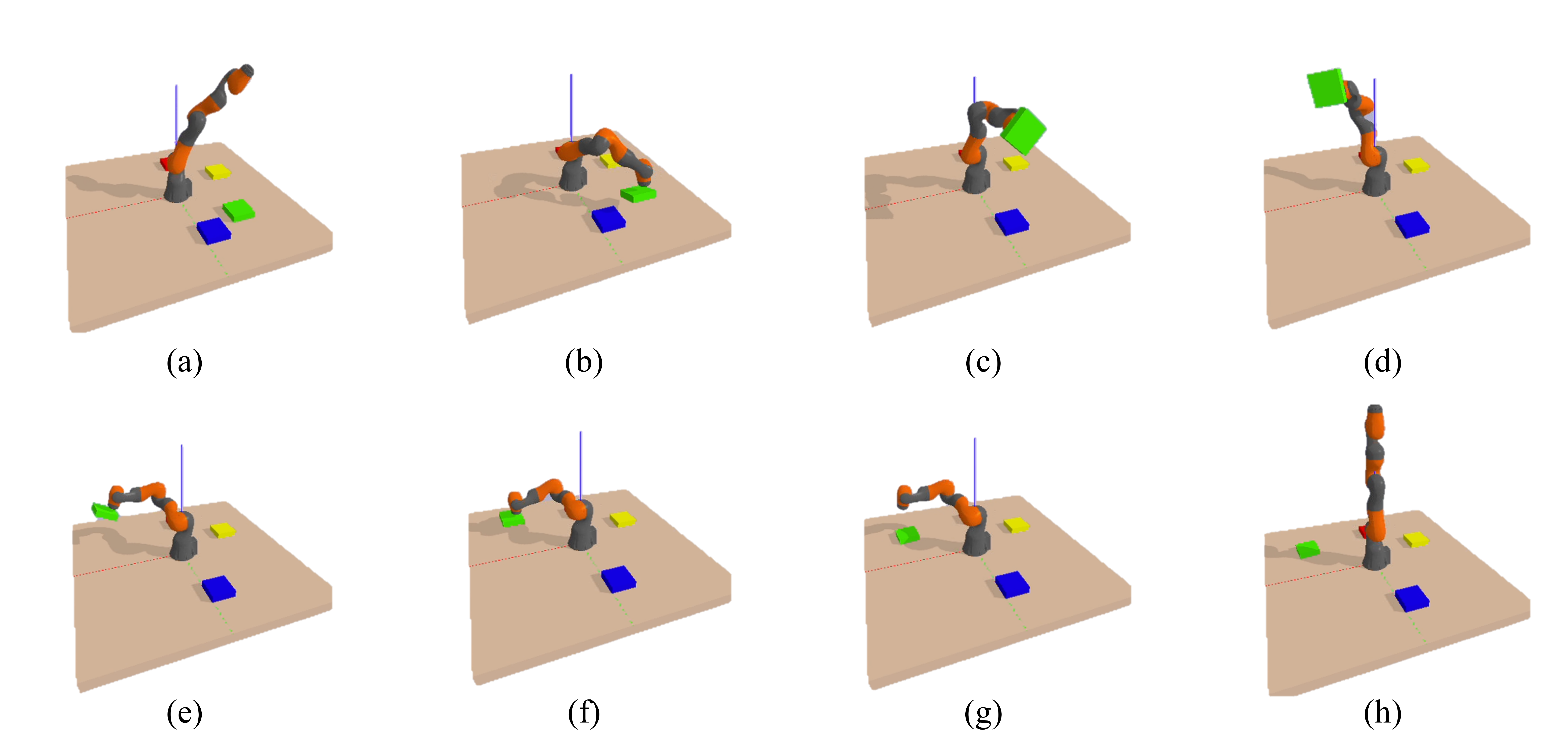}}
    \vspace{-0.15in}
    \caption{The Process of Pick and Place Block 1 (Green Block)}
    \label{fig:kuka_line1}
    \end{center}
    \vspace{-0.2in}
\end{figure}

\subsection{Pick and Place 2nd Yellow Block}
\begin{figure}[!htb]
    \begin{center}
    \centerline{\includegraphics[width=\linewidth]{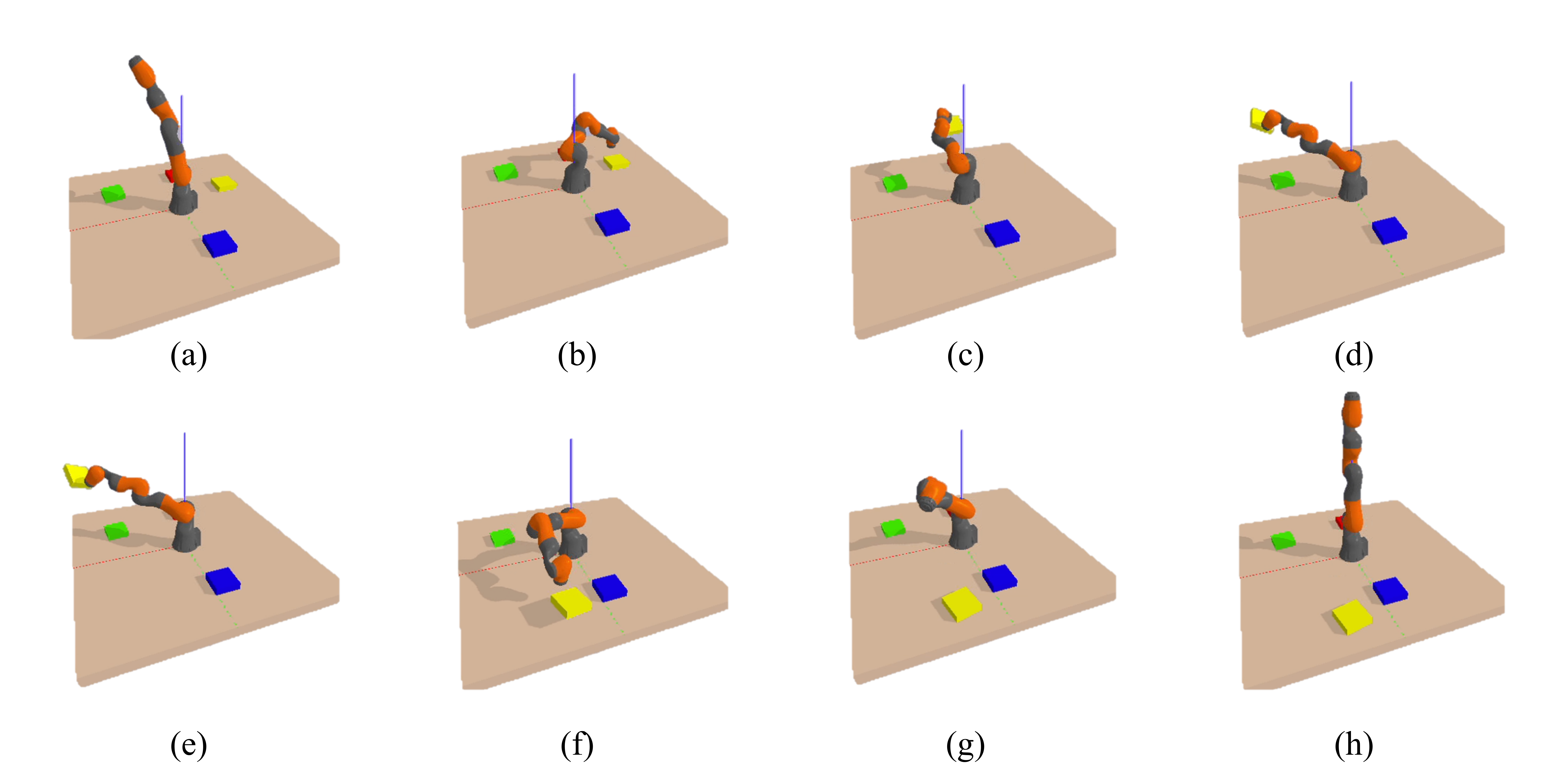}}
    \vspace{-0.15in}
    \caption{The Process of Pick and Place Block 2 (Yellow Block)}
    \label{fig:kuka_line2}
    \end{center}
    \vspace{-0.2in}
\end{figure}

\subsection{Pick and Place 3rd Blue Block}
\begin{figure}[!htb]
    \begin{center}
    \centerline{\includegraphics[width=\linewidth]{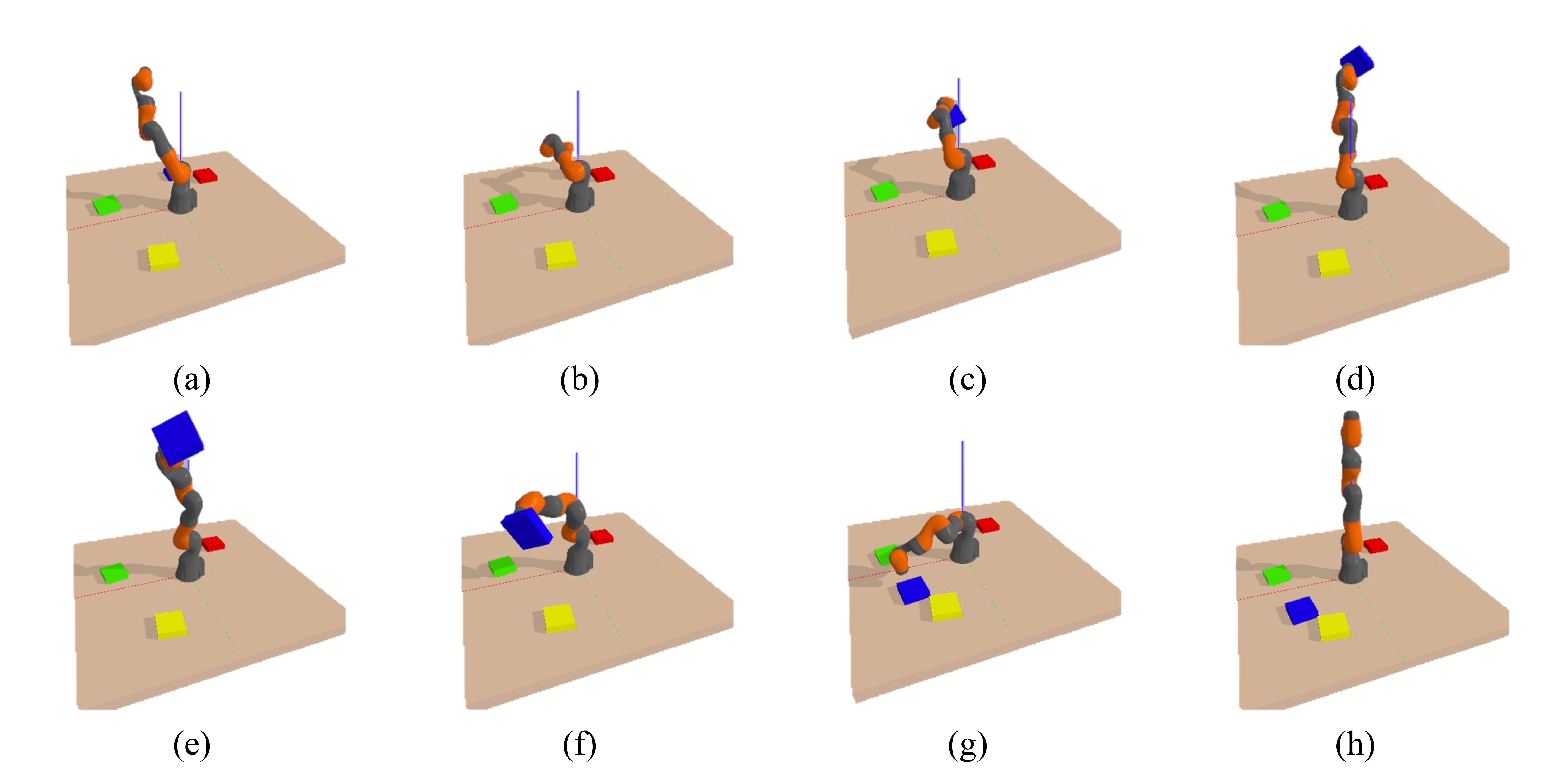}}
    \vspace{-0.15in}
    \caption{The Process of Pick and Place Block 3 (Blue Block)}
    \label{fig:kuka_line3}
    \end{center}
    \vspace{-0.2in}
\end{figure}

\subsection{Pick and Place 4th Red Block}
\begin{figure}[!htb]
    \begin{center}
    \centerline{\includegraphics[width=\linewidth]{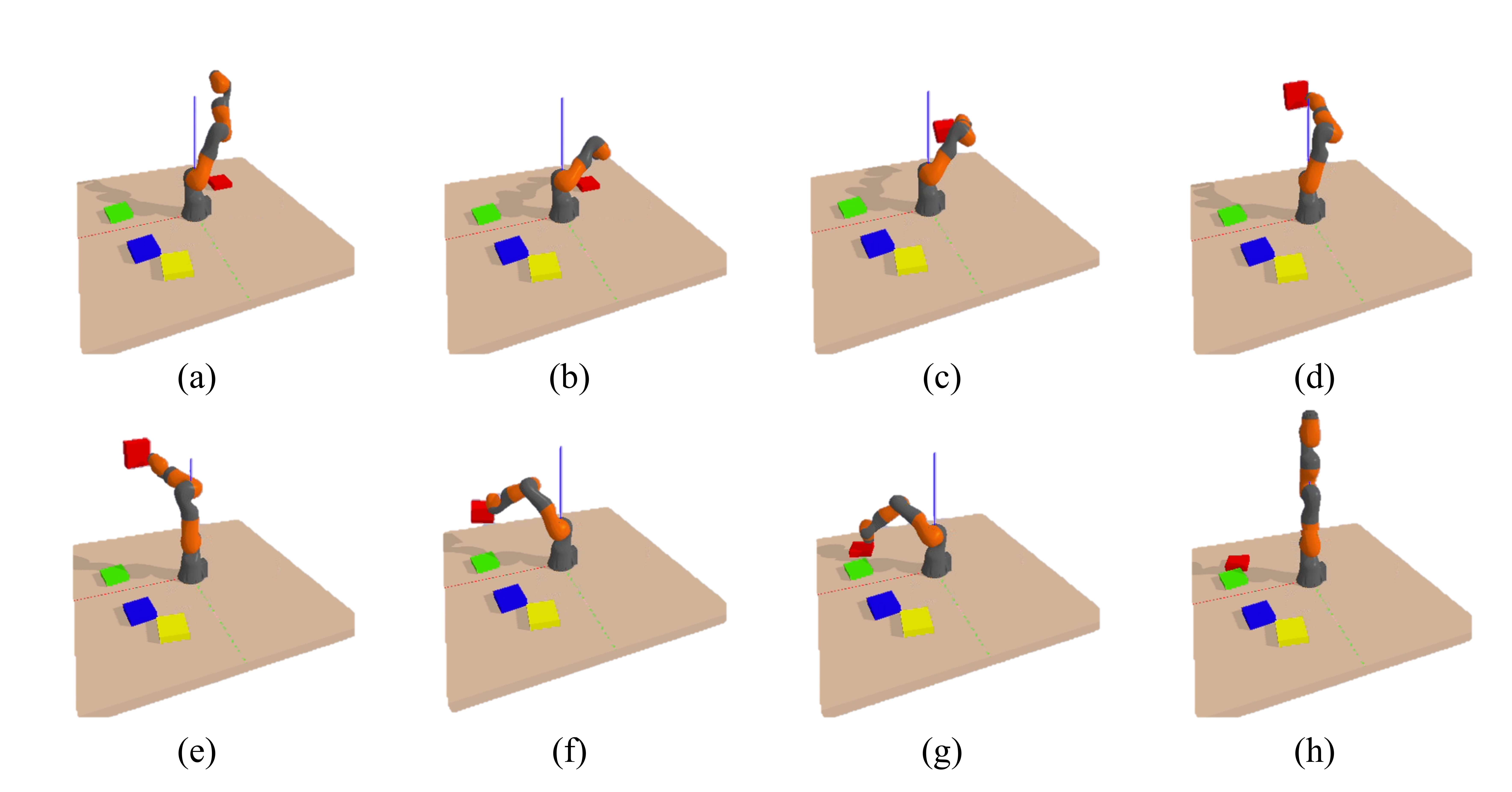}}
    \vspace{-0.15in}
    \caption{The Process of Pick and Place Block 4 (Red Block)}
    \label{fig:kuka_line4}
    \end{center}
    \vspace{-0.2in}
\end{figure}


\newpage
\section{Implementation Details and Hyperparameters}
\label{appedix:detail}

\subsection{Details of Baseline Performances}

\textbf{Maze2D Tasks.} 
We perform two different tasks on the Maze2D environment to validate the performance enhancement and adaptation ability of \alias on seen and unseen tasks.
\begin{itemize}
    \item \textbf{Overall Performance of Navigation Task}: We report the performance of CQL and IQL on the standard Maze2D environments from Table~2 in D4RL whitepaper \citep{fu2020d4rl} and follow the hyperparameter settings described in \cite{janner2022planning}. The performance of Diffuser also refers to Table~1 in \cite{janner2022planning}. To reproduce the experimental results, we use the official implementation from the authors of IQL\footnote{\scriptsize\url{https://github.com/ikostrikov/implicit_q_learning} \label{diffuser_code}} and Diffuser\footnote{\scriptsize\url{https://github.com/jannerm/diffuser}}.

    \item \textbf{Navigation with Gold Coin Picking Task}: We modified the official code of Diffuser and tuned over the hyperparameter $\alpha \in \{-50, -100, -200\}$ (the scalar of the guidance) in Equation \ref{eq:cond_gen_model} to adjust the planner to be competent for newly designed gold coin picking task, which is also the basis of our method \alias.
    
\end{itemize}

\textbf{KUKA Pick and Place Tasks.}
Similar to the unseen tasks in Maze2D environment, we also ran the official implementation of IQL and Diffuser.

\textbf{MuJoCo Locomotion Tasks.} 
We report the scores of BC, CQL and IQL from Table~1 in \cite{kostrikov2021offline}. We take down scores of DT from Table~2 in \cite{chen2021decision}, TT from Table~1 in \cite{janner2021offline}, MOPO from Table~1 in \cite{yu2020mopo}, MOReL from Table~2 in \cite{kidambi2020morel}, MBOP from Table~1 in \cite{argenson2020model} and Diffuser from Table~2 in \cite{janner2022planning}. All baselines are trained using the same offline dataset collected by a specific expert policy.

\begin{table*}[h]
\caption{\small \textbf{Metric Values for Reward Discriminator in MuJoCo Environment.} 
The rewards are calculated utilizing D4RL \cite{fu2020d4rl} locomotion suite.
}
\label{table:mujoco_setting}
\vspace{0.2cm}
\centering
\small
\begin{tabular}{cccc}
\toprule
\textbf{Dataset} & \textbf{Environment} & \textbf{1$^{\text{st}}$ Phase} & \textbf{2$^{\text{nd}}$ Phase}\\ 
\midrule
Med-Expert & HalfCheetah & 10840 & 10867 \\ 
Med-Expert & Hopper & 3639 & 3681 \\ 
Med-Expert & Walker2d & 4900 & 4950 \\ 
\midrule
Medium & HalfCheetah & 5005 & 5150 \\ 
Medium & Hopper & 3211 & 3225
 \\ 
Medium & Walker2d & 3700 & 3843 \\ 
\midrule
Med-Replay & HalfCheetah & 4600 & 4800
\\ 
Med-Replay & Hopper & 3100 & 3136 \\ 
Med-Replay & Walker2d & 3900 & 3920 \\ 
\bottomrule
\end{tabular}
\end{table*}

\subsection{Metric Values for Reward Discriminator}
\label{appendix:metric_value}
\textbf{Maze2D Environment}. For the three different-size Maze2D settings, unlike MuJoCo, different trajectories are different in lengths which achieve different rewards. So, we not only consider the absolute value of the rewards $\mathcal{R}$ but also introduce trajectory length $\mathcal{L}$ and reward-length ratio into the criteria of discrimination. We prefer trajectories with longer lengths or those having higher reward-length ratios. Additionally, we denote the maximum episode steps of the environment as $Max_e$ (Maze2D-UMaze: $300$, Maze2D-Medium: $600$, Maze2D-Large: $800$). And then, we have following metrics to filter out high-quality data.

\begin{itemize}
    \item \textbf{Maze2D-UMaze}: The trajectory is required to satisfy $\mathcal{L} > 200$ or $\mathcal{L} > 50$ and $\mathcal{R} + 1.0 * (Max_e - \mathcal{L}) > 210$ which is equal to measure the $\mathcal{R}/\mathcal{L}$.

    \item \textbf{Maze2D-Medium}: The trajectory is required to satisfy $\mathcal{L} > 450$ or $\mathcal{L} > 200$ and $\mathcal{R} + 1.0 * (Max_e - \mathcal{L}) > 400$.

    \item \textbf{Maze2D-Large}: The trajectory is required to satisfy $\mathcal{L} > 650$ or $\mathcal{L} > 270$ and $\mathcal{R} + 1.0 * (Max_e - \mathcal{L}) > 400$.
    
\end{itemize}

\textbf{KUKA Robot Arm}. For the KUKA Robot Arm environment, we define a sparse reward function that achieves one if and only if the placement is successful and zero otherwise. Therefore, we take the condition $\mathcal{R} >= 2.0$ which means at least half of the four placements are successful. 

\textbf{MuJoCo Environment.}
For MuJoCo locomotion environment, as we describe in Sec. \ref{sec:mujoco}, we directly use the reward derived after generated state sequence and action sequence to filter out high-quality synthetic data. The specific values for MuJoCo are shown in Table \ref{table:mujoco_setting}.

\subsection{Amount of Synthetic Data for Each Iteration}

The amount of synthetic data for each iteration is another important hyperparameter for \alias. Different tasks have different settings. We give detailed hyperparameters here.

\begin{table*}[htb]
\caption{\small \textbf{Amount of Synthetic Data for Each Iteration.} The number of synthetic data for KUKA Arm pick-and place task consists of 1000 generated trajectories and 10000 cross-domain trajectories from the unconditional stacking task.
}
\label{table:amount_data}
\vspace{0.2cm}
\centering
\small
\begin{tabular}{cccc}
\toprule
\textbf{Dataset} & \textbf{Task} & \textbf{\# of Expert Data} & \textbf{\# of Synthetic Data} \\ 
\midrule
MuJoCo & Locomotion & $10^6$, $2 \times 10^6$& 50000 \\
\midrule
Maze2D & Navigation & $10^6$, $2 \times 10^6$, $4 \times 10^6$ & $10^6$ \\
Maze2D & Gold Coin Picking & 0 & $10^6$\\
\midrule
KUKA Robot & Unconditional Stacking & 10000 & - \\
KUKA Robot & Pick-and-Place & 0 & 11000 \\ 
\bottomrule
\end{tabular}
\end{table*}


\subsection{Other Details}
\begin{enumerate}
\item A temporal U-Net~\cite{ronneberger2015u} with 6 repeated residual blocks is employed to model the noise $\epsilon_\theta$ of the diffusion process. Each block is comprised of two temporal convolutions, each followed by group norm \cite{wu2018group}, and a final Mish non-linearity \cite{misra2019mish}. Timestep embeddings are generated by a single fully-connected layer and added to the activation output after the first temporal convolution of each block.

\item The diffusion model is trained using the Adam optimizer \citep{kingma2014adam} with a learning rate of $2\times10^{-4}$ and batch size of $32$.

\item The training steps of the diffusion model are $1M$ for MuJoCo locomotion task, $2M$ for tasks on Maze2D and $0.7M$ for KUKA Robot Arm tasks.

\item The planning horizon $T$ is set as 32 in all locomotion tasks, $128$ for KUKA pick-and-place, $128$ in Maze2D-UMaze, $192$ in Maze2D-Medium, and $384$ in Maze2D-Large.

\item We use $K = 100$ diffusion steps for all locomotion tasks, $1000$ for KUKA robot arm tasks, $64$ for Maze2D-UMaze, $128$ for Maze2D-Medium, and $256$ for Maze2D-Large.

\item We choose 2-norm as the auxiliary guided function in the combination setting of Section \ref{sec:method_gen} and the guidance scale $\alpha \in \{1, 5, 10, 50, 100\}$ of which the exact choice depends on the specific task.

\end{enumerate}

\newpage
\section{Testing-time and Training-time Analysis}
\label{appendix:time}
\subsection{Testing-time Characteristic of \alias}
\alias only generates synthetic data during training and performs denoising once during inference to obtain the optimal trajectory. We show the inference time of generating an action taken by Diffuser \citep{janner2022planning} and our method in Table \ref{table:test_time_mujoco} and Table \ref{table:test_time_maze}. All these data are tested with one \textit{NVIDIA RTX 3090 GPU}.

\begin{table*}[htb]
\vspace{-10pt}
\caption{\small \textbf{Testing Time in D4RL MuJoCo Environment.} The unit in the table is second (s).
}
\label{table:test_time_mujoco}
\vspace{0.2cm}
\centering
\small
\begin{tabular}{cccc}
\toprule
\textbf{Dataset} & \textbf{Environment} & \textbf{Diffuser} & \textbf{AdaptDiffuser} \\ 
\midrule
Med-Expert & HalfCheetah & 1.38 s & 1.41 s \\ 
Med-Expert & Hopper & 1.57 s & 1.59 s \\ 
Med-Expert & Walker2d & 1.60 s & 1.56 s \\ 
\midrule
Medium & HalfCheetah & 1.40 s & 1.40 s \\ 
Medium & Hopper & 1.60 s & 1.56 s \\ 
Medium & Walker2d & 1.57 s & 1.57 s \\ 
\midrule
Med-Replay & HalfCheetah & 1.43 s & 1.37 s
\\ 
Med-Replay & Hopper & 1.59 s & 1.55 s \\ 
Med-Replay & Walker2d & 1.55 s & 1.58 s \\ 
\bottomrule
\end{tabular}
\vspace{-8pt}
\end{table*}

\begin{table*}[htb]
\caption{\small \textbf{Testing Time in D4RL Maze2D and KUKA Environments.} The test time of KUKA is derived by dividing the trajectory generation time by horizon size. The unit in the table is second (s).}
\label{table:test_time_maze}
\vspace{0.2cm}
\centering
\small
\begin{tabular}{ccc}
\toprule
\textbf{Environment} & \textbf{Diffuser} & \textbf{AdaptDiffuser} \\ 
\midrule
Maze2D U-Maze & 0.70 s & 0.69 s \\
Maze2D Medium & 1.42 s & 1.44 s \\
Maze2D Large & 2.80 s & 2.76 s \\
\midrule
KUKA Pick and Place & 0.21 s & 0.21 s \\
\bottomrule
\end{tabular}
\end{table*}

From the tables, we can see that the inference time of AdaptDiffuser is almost equal to that of Diffuser \citep{janner2022planning}. And because the denoising steps of different datasets are different, the testing times are different between environments. For MuJoCo, the inference time of an action is approximately 1.5s, while for Maze2D the inference time is about 1.6s (on average of three environments), and for KUKA about 0.21s. The inference time is feasible for real-time robot control. Additionally, in Section \ref{ablation:limit_data} of our paper, we have also demonstrated how limited number of high quality expert data would affect our method's performance.

What's more, as suggested in Diffuser \citep{janner2022planning}, we can improve the testing time by warm-starting the state diffusion, which means we start with the state sequence generated from the previous environment step and then reduce the number of denoising steps.

\begin{table*}[b]
\vspace{-8pt}
\caption{\small \textbf{Synthetic Data Generation Time and Training Time in MuJoCo Environment.} The synthetic data generation time listed here is about the time to generate one high-quality trajectory. The total training time of \alias is the sum of the following three parts. The quality standard of selected trajectories are the same as those stated in Appendix \ref{appendix:metric_value}. The unit in the table is hour (h).}
\label{table:train_time_mujoco}
\vspace{0.2cm}
\centering
\small
\begin{tabular}{ccccc}
\toprule
\textbf{Dataset} & \textbf{Environment} & \textbf{Synthetic Data Gen. Time} & \textbf{AdaptDiffuser Fine-Tuning} & \textbf{Diffuser Training} \\ 
\midrule
Med-Expert & HalfCheetah & 4.4 h & 6.8 h & 44.2 h \\ 
Med-Expert & Hopper & 5.7 h & 6.4 h & 37.0 h \\ 
Med-Expert & Walker2d & 3.0 h & 6.6 h & 43.0 h \\ 
\midrule
Medium & HalfCheetah & 2.4 h & 7.0 h & 45.3 h \\ 
Medium & Hopper & 4.8 h & 6.2 h & 36.2 h \\ 
Medium & Walker2d & 4.7 h & 6.4 h & 43.0 h \\ 
\midrule
Med-Replay & HalfCheetah & 15.7 h & 7.4 h & 45.3 h
\\ 
Med-Replay & Hopper & 11.9 h & 6.5 h & 36.1 h \\ 
Med-Replay & Walker2d & 4.3 h & 6.4 h & 42.8 h \\ 
\bottomrule
\end{tabular}
\end{table*}

\subsection{Training-time Characteristic of \alias}
The training time of \alias can be seen as the sum of synthetic data generation time and diffusion model training time. The synthetic data generation time depends on the quality standard of the trajectory to be selected.

What's more, to accelerate the training, we use the warming-up technique which takes the pre-trained Diffuser model as the basis of AdaptDiffuser, and then performs fine-tuning on new generated data with fewer training steps (1/4 in actual use). Then we show these three parts' times in Table \ref{table:train_time_mujoco}. All these times are tested with one \textit{NVIDIA RTX 3090 GPU}.

It can be found from the table that the model training time dominates the total pre-training time while the extra time spent, such as synthetic data generation, is a relatively small part. The total time required to pre-train AdaptDiffuser is on average 54 hours (sum of the three parts) comparable to Diffuser's 41 hours.

Besides, the data generation process can be executed parallel. For example, in our D4RL MuJoCo environment, we generate 10 trajectories for each dataset at each phase. Under parallel settings, the total time to collect all ten synthetic trajectories is the same as the time to collect one trajectory. If using more GPUs, the synthetic data generation time can be further reduced.

\section{Comparison with Decision Diffuser}
Decision Diffuser (DD) \citep{ajay2022conditional} is a concurrent work with ours and improves the performance of Diffuser \citep{janner2022planning} by introducing planning with classifier-free guidance and acting with inverse-dynamics.

Generally speaking, our method is a general algorithm that enables diffusion-based planners to have self-evolving ability that can perform well on existing and unseen (zero-shot) tasks, mainly by generating high-quality synthetic data with reward and dynamics consistency guidance for diverse tasks simultaneously. Therefore, regardless of which diffusion-based planner to be used, there can exist AdaptDiffuser, AdaptDecisionDiffuser, etc. It means that the method we introduce to make the planner self-evolving does not conflict with the improvements proposed by Decision Diffuser. The improvements of these two works can complement each other to further enhance the performance of diffusion model-based planners.

We also compare the performance of Decision Transformer (DT) \citep{chen2021decision}, Trajectory Transformer (TT) \citep{janner2021offline}, Diffuser \citep{janner2022planning}, Decision Diffuser \citep{ajay2022conditional} and our method here. Results about Decision Diffuser are quoted from \cite{ajay2022conditional}.

\begin{table*}[htb]
\vspace{-5pt}
\caption{\small \textbf{Performance Comparison with Decision Diffuser in MuJoCo Environment.} 
We report normalized average returns of D4RL tasks \citep{fu2020d4rl} in the table. And the mean and the standard error are calculated over 3 random seeds.
}
\label{table:with_DD}
\vspace{0.2cm}
\centering
\small
\tabcolsep 4.5pt
\begin{tabular}{ccccccc}
\toprule
\textbf{Dataset} & \textbf{Environment} & \textbf{DT} & \textbf{TT} & \textbf{Diffuser} & \textbf{Decision Diffuser} & \textbf{AdaptDiffuser}\\ 
\midrule
Med-Expert & HalfCheetah & $86.8$ & $\textbf{95.0}$ & $88.9$ & $90.6$ & $89.6$ \scriptsize{\raisebox{1pt}{$\pm 0.8$}} \\
Med-Expert & Hopper & $107.6$ & $110.0$ & $103.3$ & $\textbf{111.8}$ & $\textbf{111.6}$ \scriptsize{\raisebox{1pt}{$\pm 2.0$}} \\
Med-Expert & Walker2d & $\textbf{108.1}$ & $101.9$ & $106.9$ & $\textbf{108.8}$ & $\textbf{108.2}$ \scriptsize{\raisebox{1pt}{$\pm 0.8$}} \\
\midrule
Medium & HalfCheetah & $42.6$ & $46.9$ & $42.8$ & $\textbf{49.1}$ & $44.2$ \scriptsize{\raisebox{1pt}{$\pm 0.6$}} \\
Medium & Hopper & $67.6$ & $61.1$ & $74.3$ & $79.3$ & $\textbf{96.6}$ \scriptsize{\raisebox{1pt}{$\pm 2.7$}} \\
Medium & Walker2d & $74.0$ & $79.0$ & $79.6$ & $82.5$ & $\textbf{84.4}$ \scriptsize{\raisebox{1pt}{$\pm 2.6$}} \\
\midrule
Med-Replay & HalfCheetah & $36.6$ & $41.9$ & $37.7$ & $\textbf{39.3}$ & $\textbf{38.3}$ \scriptsize{\raisebox{1pt}{$\pm 0.9$}} \\
Med-Replay & Hopper & $82.7$ & $91.5$ & $93.6$ & $\textbf{100.0}$ & $92.2$ \scriptsize{\raisebox{1pt}{$\pm 1.5$}} \\
Med-Replay & Walker2d & $66.6$ & $82.6$ & $70.6$ & $75.0$ & $\textbf{84.7}$ \scriptsize{\raisebox{1pt}{$\pm 3.1$}} \\
\midrule
\multicolumn{2}{c}{\textbf{Average}} & 74.7 & 78.9 & 77.5 & 81.8 & \textbf{83.4} \hspace{.58cm} \\
\bottomrule
\end{tabular}
\end{table*}

From the table, we can see that in most datasets, the performance of \alias is comparable to or better than that of Decision Diffuser. And the normalized average return of \alias is $83.4$ higher than all of the other methods (i.e. $74.7$ of DT, $78.9$ of TT, $77.5$ of Diffuser and $81.8$ of Decision Diffuser).

\newpage
\section{Discussions}
\label{appendix:discuss}
\subsection{Adapt \alias to Maze2D Gold Coin Picking Task with Coin Locating Far from the Optimal Path}

\alias works when the gold coin is located nowhere near the optimal path. Figure \ref{fig:maze_adapt} of our paper has shown one case. The sub-figure (b) of Figure \ref{fig:maze_adapt} show the optimal path when there are no gold coins in the maze. (The generated route walks at the bottom of the figure.) And then if we add a gold coin in the (4,2) position of the maze, \alias will generate a new path that passes through the gold coin as shown in the sub-figure (d) of Figure \ref{fig:maze_adapt}. (The generated route walks in the middle of the figure.)

In our point of view, our method works mainly because we change the start point and goal point multiple times during training. Diffusion model can generate trajectories that have not been seen in the expert dataset. And as long as the paths generated during training can cover the entire trajectory space as much as possible, \alias can generate the path through any location of the gold coin during planning. However, it is true that the success rate of generating trajectories for some extremely hard cases that the gold coin is far from the planned path and the agent has to take a turn back to obtain the gold coin, is lower than that of common cases.

\subsection{Adapt \alias to High-dimensional Observation Space Tasks}
\alias is feasible for high-dimensional observation space tasks. One possible and widely-used solution, we suggest, is to add an embedding module (e.g. MLP) after input to convert the data from high-dimensional space to latent space, and then employ \alias in latent space to solve the problem. Stable Diffusion \cite{rombach2022high} has shown the effectiveness of this method, which deploys an Auto-Encoder to encode image into a latent representation and uses a decoder to reconstruct the image from the latent after denoising. MineDoJo \cite{fan2022minedojo} also takes this technique and achieves outstanding performance in image-based RL domain.

\newpage
\section{Generate Diverse Maze Layouts with ChatGPT}
\label{appendix:chatgpt}
Inspired by the remarkable generation capabilities demonstrated by recent advancements in large language models (LLMs), exemplified by ChatGPT, we propose a novel approach that harnesses the potential of LLM to accelerate the process of synthetic data generation. In this section, we focus specifically on utilizing LLM to assist in generating diverse Maze layouts. This objective is driven by the need to create a multitude of distinct maze layouts to facilitate varied path generations, ultimately enhancing the performance and adaptability of \alias.
Traditionally, the manual design of feasible and terrain complex maze environments
is a time-consuming endeavor that requires to try and adjust multiple times. In light of this challenge, leveraging ChatGPT for maze environment generation emerges as an appealing alternative, streamlining the process and offering enticing advantages.
We show the generated examples in Fig. \ref{fig:chatgpt_maze}. Besides, we can ask the ChatGPT to summarize the rules of generating feasible mazes, shown in Fig. \ref{fig:rule_chat}.

\begin{figure}[htb]
    \vspace{-0.1in}
    \begin{center}
    \centerline{\includegraphics[width=0.7\linewidth]{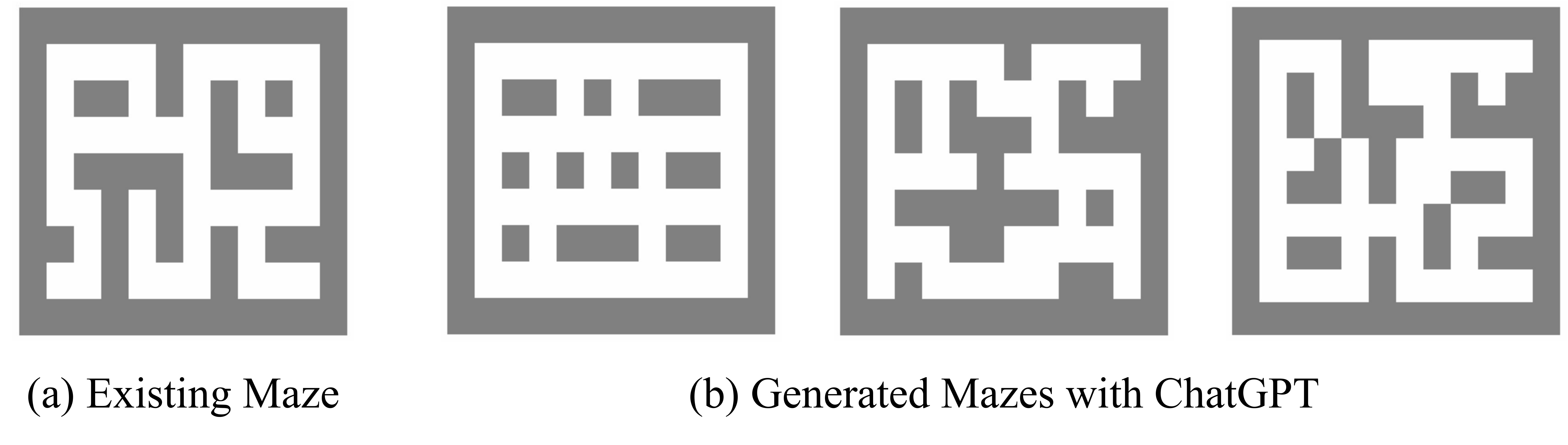}}
    \vspace{-0.2in}
    \caption{\textbf{Generated Maze examples by ChatGPT.} From simple terrain to complex terrain (with multiple dead ends and loops).}
    \label{fig:chatgpt_maze}
    \end{center}
    \vspace{-0.3in}
\end{figure}

\begin{figure}[htb]
    \begin{center}
    \centerline{\includegraphics[width=0.7\linewidth]{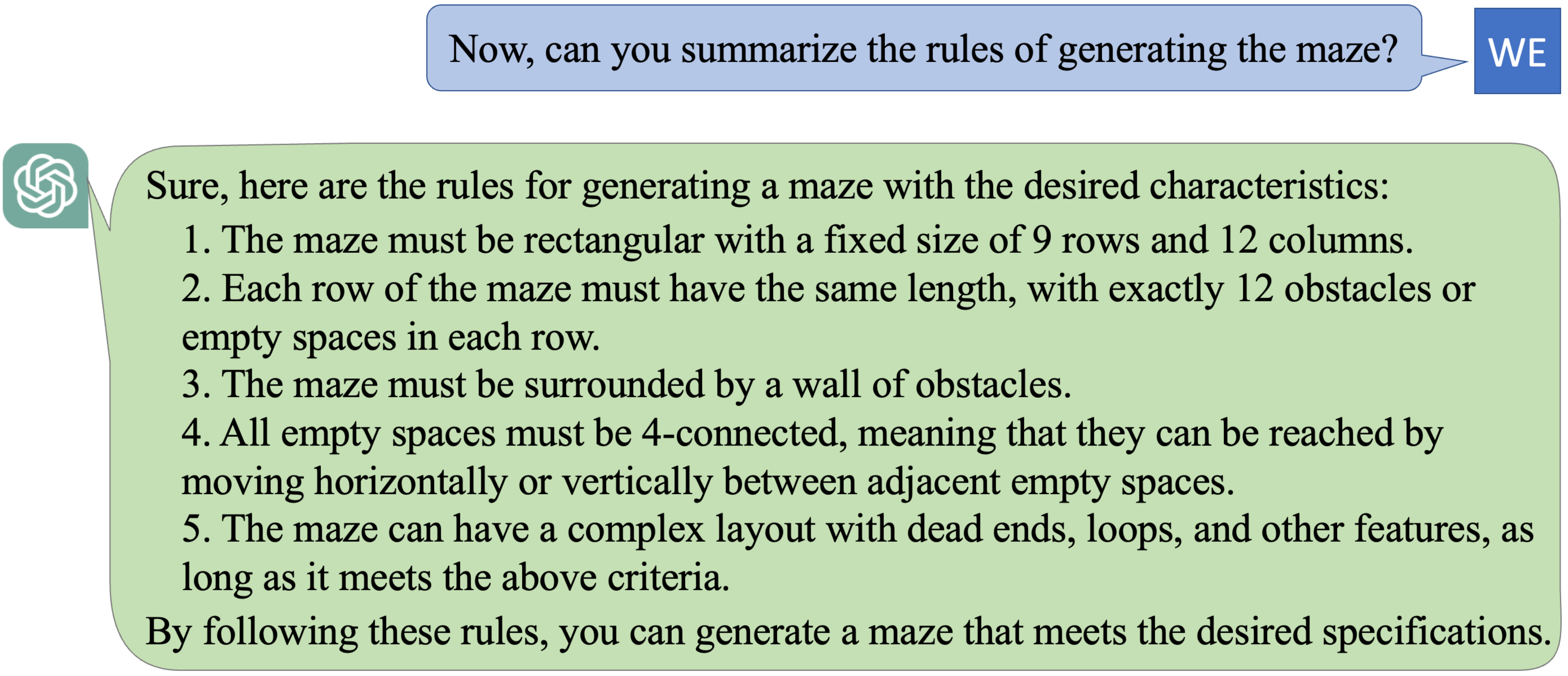}}
    \vspace{-0.1in}
    \caption{\textbf{Rules for generating maze layouts summarized by ChatGPT.}}
    \label{fig:rule_chat}
    \end{center}
    \vspace{-0.3in}
\end{figure}

We also give our prompts here. We find that providing ChatGPT with a few existing feasible maze examples (few-shot) can effectively improve the quality of the generated mazes, so we design the prompts in this way. From prompt 1 to prompt 2, we also find that the terrains of generated mazes are exactly from simple to complex.

\textbf{Prompt1:} ``I will give you a legal string expression of a MAZE. In the MAZE, the `\#' represents the obstacles and the `O' represents the empty space. Could you generate one more maze with different terrain obeying to the rules: The MAZE should be 9*12, and the surrounding of the MAZE should be obstacles, that is `\#', and all empty places should be 4-connected. The example maze is 
\begin{equation*}
\begin{aligned}
\text{LARGE\_MAZE} = 
    &``\#\#\#\#\#\#\#\#\#\#\#\#\backslash\backslash"+\\
    &``\#OOOO\#OOOOO\#\backslash\backslash"+\\
    &``\#O\#\#O\#O\#O\#O\#\backslash\backslash"+\\
    &``\#OOOOOO\#OOO\#\backslash\backslash"+\\
    &``\#O\#\#\#\#O\#\#\#O\#\backslash\backslash"+\\
    &``\#OO\#O\#OOOOO\#\backslash\backslash"+\\
    &``\#\#O\#O\#O\#O\#\#\#\backslash\backslash"+\\
    &``\#OO\#OOO\#OOO\#\backslash\backslash"+\\
    &``\#\#\#\#\#\#\#\#\#\#\#\#"\ "
\end{aligned}
\end{equation*}

\textbf{Prompt2:} ``Please generate more complex Maze that has more complex terrains (i.e. more dead ends, loops, and obstacles)".

\end{document}